\documentclass[letterpaper,11pt]{article}

\usepackage{amsfonts,amsmath,amssymb,amsthm,graphicx}
\usepackage{mathtools}
\usepackage{mathrsfs}
\usepackage{ifthen}
\usepackage{changepage}
\usepackage{enumerate}
\usepackage{scalerel}
\usepackage{accents}
\usepackage[margin=1in]{geometry}
\usepackage{enumitem}
\usepackage{natbib}
\usepackage{subcaption} 
\usepackage{float}      
\usepackage[utf8]{inputenc} 
\usepackage[T1]{fontenc} 
\usepackage{csquotes}
\usepackage{comment}
\usepackage{bbm}
\usepackage{lmodern}
\usepackage{array}
\usepackage{makecell}
\usepackage{csquotes}
\usepackage{booktabs}
\usepackage{xfrac}
\usepackage{multirow}
\usepackage[colorlinks=true, linkcolor= blue!70!black, citecolor=blue!70!black,urlcolor=blue,breaklinks=true]{hyperref}

\usepackage[suppress]{color-edits}
\addauthor{cp}{cyan}
\addauthor{wt}{purple}

\usepackage{pgfplots}
\usepackage{tikz}
\pgfplotsset{compat=1.17}
\usetikzlibrary{patterns}
\usepgfplotslibrary{fillbetween}
\usetikzlibrary{intersections}
\usetikzlibrary{arrows.meta,positioning}
\usetikzlibrary{arrows, decorations.markings}

\definecolor{highlightred}{RGB}{180, 0, 0}
\usepackage{tcolorbox}

\usepackage{thm-restate}
\usepackage{cleveref}
\crefname{claim}{claim}{claims}

\tikzstyle{vecArrow} = [thick, decoration={markings,mark=at position
   1 with {\arrow[semithick]{open triangle 60}}},
   double distance=1.4pt, shorten >= 5.5pt,
   preaction = {decorate},
   postaction = {draw,line width=1.4pt, white,shorten >= 4.5pt}]
\tikzstyle{innerWhite} = [semithick, white,line width=1.4pt, shorten >= 4.5pt]

\allowdisplaybreaks

\theoremstyle{plain}
\newtheorem{theorem}{Theorem}[section]          
              
\newtheorem{proposition}[theorem]{Proposition}

\theoremstyle{definition} 
\newtheorem{definition}[theorem]{Definition}

\theoremstyle{plain}

\newcommand{\squishlist}{
\begin{list}{{$\bullet$}}
{\setlength{\itemsep}{3pt}      \setlength{\parsep}{1pt}
\setlength{\topsep}{1pt}       \setlength{\partopsep}{0pt}
\setlength{\leftmargin}{1em} \setlength{\labelwidth}{1em}
\setlength{\labelsep}{0.5em} } }
\newcommand{\squishend}{  \end{list}}

\newcommand{\xhdr}[1]{\paragraph{\bf #1}}

\newcommand{\Payoff}[2][]{\text{\bf Payoff}\ifthenelse{\not\equal{}{#1}}{_{#1}}{}\!\left[{\def\givenn{\middle|}#2}\right]}
\newcommand{\Reg}[2][]{\text{\bf Reg}\ifthenelse{\not\equal{}{#1}}{_{#1}}{}\!\left[{\def\givenn{\middle|}#2}\right]}

\newcommand{\state}{\theta}
\newcommand{\statespace}{\Theta}
\newcommand{\prior}{\lambda}
\newcommand{\outcome}{\omega}
\newcommand{\outcomespace}{\Omega}

\newcommand{\signal}{s}
\newcommand{\signalspace}{S}
\newcommand{\inforstructure}{\pi}
\newcommand{\inforstructureClass}{\Pi}
\newcommand{\inforstructureClassCI}{\Pi_{\textsc{CI}}}
\newcommand{\inforstructureClassBO}{\Pi_{\textsc{BO}}}
\newcommand{\report}{x}
\newcommand{\priormean}{\mu}
\newcommand{\Bernoulli}{\textsc{Bern}}
\newcommand{\aggregator}{f}
\newcommand{\bayesaggregator}{\aggregator^{\textsc{Bayes}}}
\newcommand{\bayesaggregatornew}{\aggregator^{\primed, \textsc{Bayes}}}
\newcommand{\loss}[2][]{\ell\ifthenelse{\not\equal{}{#1}}{_{#1}}{}\!\left({\def\givenn{\middle|}#2}\right)}
\newcommand{\avgloss}[2][]{L\ifthenelse{\not\equal{}{#1}}{_{#1}}{}\!\left({\def\givenn{\middle|}#2}\right)}
\newcommand{\expertoneweight}{\alpha}
\newcommand{\experttwoweight}{\beta}
\newcommand{\baserate}{\gamma}
\newcommand{\reals}{\mathbb{R}}

\newcommand{\logit}{\textsc{logit}}
\newcommand{\sigmoid}{\sigma}

\newcommand{\instance}{I}

\newcommand{\prob}[2][]{\text{Pr}\ifthenelse{\not\equal{}{#1}}{_{#1}}{}\!\left[{\def\givenn{\middle|}#2}\right]}
\newcommand{\expect}[2][]{\mathbb{E}\ifthenelse{\not\equal{}{#1}}{_{#1}}{}\!\left[{\def\givenn{\middle|}#2}\right]}

\newcommand{\tparen}{\big}
\newcommand{\tprob}[2][]{\text{Pr}\ifthenelse{\not\equal{}{#1}}{_{#1}}{}\tparen[{\def\given{\tparen|}#2}\tparen]}
\newcommand{\texpect}[2][]{\mathbb{E}\ifthenelse{\not\equal{}{#1}}{_{#1}}{}\tparen[{\def\given{\tparen|}#2}\tparen]}

\newcommand{\sprob}[2][]{\text{Pr}\ifthenelse{\not\equal{}{#1}}{_{#1}}{}[#2]}
\newcommand{\sexpect}[2][]{\mathbb{E}\ifthenelse{\not\equal{}{#1}}{_{#1}}{}[#2]}

\newcommand{\dd}{{\mathrm d}}

\newcommand{\cc}[1]{\ensuremath{\mathsf{#1}}} 

\newcommand{\primed}{^{\dagger}}
\newcommand{\doubleprimed}{^{\ddagger}}
\newcommand{\supp}{\cc{supp}}

\begin{document}

\title{Prior-Agnostic Robust Forecast Aggregation}
\author{
Zhi Chen\thanks{Chinese University of Hong Kong. 
Email: {\tt zhi.chen@cuhk.edu.hk}}
\and
Cheng Peng\thanks{Chinese University of Hong Kong. Email: {\tt chengpeng@link.cuhk.edu.hk}}
\and
Wei Tang\thanks{Chinese University of Hong Kong. Email: {\tt weitang@cuhk.edu.hk}}}
\date{}


\maketitle

\begin{abstract}

Robust forecast aggregation combines the predictions of multiple information sources to perform well in the worst case across all possible information structures. 
Previous work largely focuses on settings with a known binary state space, where the state is either $0$ or $1$. We study prior-agnostic robust forecast aggregation in which the aggregator observes only experts' reports, yet is ignorant of both the underlying joint information structure and the full prior, including the underlying state space.
Unlike the standard model that fixes the binary state space $\{0, 1\}$, we allow the (binary) unknown state values to be arbitrary numbers in $[0, 1]$, so the same reported probability may correspond to very different realized outcome frequencies across environments.

Our main contribution is a simple, explicit, closed-form log-odds aggregator that linearly pools forecasts in logit space, together with (nearly-)tight minimax-regret guarantees across three knowledge regimes. 
We first show that under conditionally independent (CI) signals, robust aggregation with an unknown state space is strictly harder than in the known-state setting by establishing a larger lower bound, and our aggregation rule can achieve worst-case regret of $0.0255$. Along the way, we also characterize tight regret bounds for Blackwell-ordered structures and for general information structures.
In the classical setting with known state space $\{0, 1\}$, our aggregator achieves regret strictly below $0.0226$ for CI structures.
To the best of our knowledge, this is the first explicit closed-form aggregator that achieves a regret upper bound strictly less than $0.0226$.
Finally, we extend the model to allow the aggregator to additionally know each expert's marginal forecast distribution; in this setting, with the CI structures, we show that a generalized log-odds rule achieves regret of $0.0228$, and we complement this guarantee with a lower bound of $0.0225$.

\end{abstract}

\newpage

\section{Introduction}
\label{sec:intro}
When checking tomorrow's weather, reputable sources often sharply disagree on the likelihood of rain, leaving the downstream decision-maker unsure how to combine these multiple predictions into a single, reliable forecast.
In practice, each expert (e.g., a forecasting service) forms their forecast based on observable covariates describing the current atmospheric state, such as pressure patterns, humidity levels, and radar signatures, which collectively encode a latent ``likelihood of rain.''
Importantly, these informational signals are generated from such features before the rain outcome is realized: they are informative about rain only through the underlying feature state,\footnote{For example, the weather forecasting service may observe a summary of atmospheric measurements, such as upstream radar reflectivity and local humidity profiles, which together provide a noisy cue about tomorrow's rainfall.}
rather than being conditionally drawn from the eventual binary rain/no-rain outcome itself.

The aggregator, however, typically observes only the experts' reported probabilities, not the underlying features or the mapping from features to signals, while still hoping to choose an aggregation rule that performs well uniformly across all possible (and unknown) signal structures that could have produced the experts' reports. 
If the features perfectly revealed whether it would rain, then the latent uncertainty collapses to a binary state space $\{0, 1\}$, recovering the classic robust aggregation problem with a known state space.
This problem was first studied in \citet{ABS-18} and later extended by \citet{NR-22,KWW-24,GHHKSY-25}. In reality, the features only partially predict rain, and because the aggregator does not observe them, they face a genuinely prior-agnostic robust aggregation problem in which the relevant ``state'' is the (unknown) latent likelihood encoded by those features, rather than a fixed and known binary variable. 

This same feature-outcome separation appears in many real-world applications. 
For example, election forecasters base their predictions on poll responses, turnout proxies, demographic composition, and economic indicators, not on the election outcome that will only be observed later; physicians form prognoses from symptoms, lab tests, imaging, and patient history, not from the future health outcome itself; and financial analysts recommend portfolios using earnings reports, macro signals, order flow, and risk exposures, not the realized returns. In each case, experts' forecasts are best viewed as posterior beliefs induced by intermediate features, while the aggregator must reconcile conflicting posteriors without observing either those features or the experts' information-generating processes.

Motivated by the above observations, we generalize the robust forecast aggregation problem studied in \citet{ABS-18}, which assumes a known and fixed binary state space $\{0, 1\}$, to a setting in which the state space is unknown to the aggregator.
More formally, we consider a stochastic environment in which there is an unobserved state that captures the underlying ``situation'' of the world relevant for predicting a binary outcome. 
Here, the state should be thought of as a latent driver of the outcome's likelihood, such as an underlying atmospheric configuration that makes rain more or less likely, and, more generally, as any hidden condition that determines the event's likelihood. 
To capture such latent uncertainty while keeping the model parsimonious, we allow the latent likelihood to take one of two (a priori unknown) states, drawn from an unknown prior distribution.
The outcome is then generated conditionally on the realized state via a Bernoulli distribution, with a success probability equal to the likelihood encoded by that state.
There are two experts, each observing a private signal stochastically related to the state through an unknown joint information structure, and then reporting a probabilistic forecast equal to their posterior belief about the outcome. 
The aggregator observes only the experts' reported probabilities, and does not observe the state, the experts' raw signals, the prior, or the signal generation process, and must map the pair of reports into a single forecast. 
We evaluate the aggregator's performance by squared-error loss, and regret is defined relative to an omniscient Bayesian benchmark that knows the full environment and has access to the experts' signals. 
Our goal is to design an aggregation function that achieves small worst-case regret uniformly over all priors (including the unknown states) and information structures (or a specified class of structures).

\subsection{Our Contributions}
We initiate the study of prior-agnostic robust forecast aggregation with an unknown state space, 
motivated by settings in which experts' reports are induced by latent feature states that are not directly observed by the aggregator. 
Relative to the classic robust aggregation model with a known binary state space, our formulation treats the ``state'' as an unknown latent likelihood that corresponds to the event/outcome probability.
Our goal is to design an aggregation rule whose performance is uniformly good over all priors and information structures consistent with the experts' reports, including uncertainty about the underlying state space itself.

Our first technical contribution is a simple, explicit, closed-form log-odds aggregation function that linearly pools experts' forecasts in logit space. 
In the setting of conditionally independent signals (i.e.,  experts' signals are distributed independently conditional on the realized state), 
we provide a near-tight minimax regret bound: our aggregation function achieves worst-case regret $0.0255$ (\Cref{thm:regret ub unknown}), 
and we complement this upper bound by showing that the optimal scheme cannot do much better.
In particular, we show a lower bound that no aggregation scheme can achieve regret below $\sfrac{31}{1326}\approx 0.0234$ (\Cref{thm:regret lb unknown}).
This lower bound demonstrates a strict separation between the unknown-state problem and the known-state benchmark:
under the setting with a known binary state space $\{0, 1\}$,
\citet{ABS-18} have established a regret lower bound of $\frac{1}{8}(5\sqrt{5}-11)\approx 0.0225$, and \citet{GHHKSY-25} present an algorithmic approach that can achieve regret $0.0226$, which is strictly below $0.0234$.
Taken together, these results show that robust aggregation is strictly harder when the state space is not known a priori in environments with conditionally independent signals.

We also evaluate how aggregation schemes designed for the known-state setting perform when the state space is unknown.
As shown in \Cref{tab:regret_comparison}, all methods we evaluate, including those proposed in \citet{ABS-18, KWW-24}, incur worst-case regret that is strictly above $0.03$, which is substantially larger than the regret achieved by our proposed aggregation scheme.
We also apply our log-odds aggregation scheme back to the classical known-binary-state setting.
In the known state space setting $\{0,1\}$, the same log-odds family (with an appropriate parameter choice) achieves regret that is strictly below $0.0226$ (\Cref{prop:known_binary_upper_bound}).
To the best of our knowledge, this is the first regret upper bound in this regime achieved by an explicit closed-form aggregator.
\footnote{\cpedit{
Consistent with the approach adopted in prior works~\citep{ABS-18, GHHKSY-25}, we numerically estimate and compare the worst-case regrets of our proposed aggregator against other common benchmarks.
In all of our numerical experiments, we evaluate the worst-case performance of our proposed aggregation function using a two-stage, high-precision procedure designed to avoid missing the true worst-case instance. 
First, we run a global heuristic search over the relevant family of information structures to locate a small set of candidate worst-case instances. 
Second, we independently validate each candidate via a fine-grained grid refinement (discretization step size of $10^{-5}$).
For example, in the known $\{0, 1\}$ state-space setting, this pipeline consistently identifies a worst-case regret of $0.022599$, and the subsequent refinement confirms that this value is numerically stable and strictly below 0.0226.
}}

Beyond conditional independence, we also obtain sharp minimax characterizations in the unknown-state-space setting for two additional settings with different classes of information structures. 
For Blackwell-ordered information structures (one expert is uniformly more informative than the other), we show that not knowing the state values does not hurt the worst-case aggregation performance: the minimax regret is exactly $\frac{1}{8}(5\sqrt{5}-11)$, matching the known-$\{0,1\}$ setting (\Cref{pro:blackwell_lowerupperbound_unknown}). 
Moreover, this bound is achieved by directly applying the precision-weighted aggregation scheme proposed in \citet{ABS-18}, as the scheme depends solely on the observed forecasts (via their ``precision,'' i.e., inverse variance) and not on the unknown state values or prior. 
For fully general information structures (no restrictions on correlation), the robust aggregation is fundamentally limited: the minimax regret is exactly $\sfrac{1}{4}$. This is achieved by the trivial constant rule that always outputs $\sfrac{1}{2}$, and the matching lower bound follows from an inclusion argument (the adversary can restrict attention to the classic binary-state hard instance) (\Cref{pro:general_lbub_unknow}), where a lower bound $\sfrac{1}{2}$ in the known state space setting is known \citep{ABS-18}.

Finally, we explore the setting in which the aggregator additionally knows each expert's marginal forecast distribution (i.e., the unconditional distribution of the expert's posterior mean).
Note that this extra information can indeed make some environments essentially trivial. 
In Blackwell-ordered settings, the aggregator can compare the informativeness implied by the two marginal distributions (via the induced Blackwell order), identify the better-informed expert, and simply follow that expert, achieving zero regret. 
Likewise, in the classical conditionally independent model with a known binary state space $\{0, 1\}$, observing an expert's marginal forecast distribution pins down the prior mean via Bayes plausibility. Since the omniscient Bayesian benchmark under conditional independence depends on the information structure only through this prior mean, aggregation again becomes trivial once the marginals are known.

Somewhat surprisingly, however, knowing the marginals does not, in general, eliminate worst-case difficulty in our unknown-state setting. 
For general information structures, the minimax regret remains exactly $0.25$ even with known marginals.
Even under conditionally independent structures, we prove a minimax lower bound of $\frac{1}{8}(5\sqrt{5}-11)$ (\Cref{thm:regret lb unknown w marg}): unlike the known-$\{0, 1\}$ setting, the adversary can exploit different state spaces that induce the same marginal forecast distributions, so the marginals do not reveal which environment is realized.
We then show that a generalized log-odds aggregation function, which in fact uses only the inferred prior mean (rather than the full marginals), achieves a nearly matching guarantee: for a suitable choice of parameters, it attains the worst-case regret of at most $0.022763$ (\Cref{pro:con_ub_knowmarg}).

We summarize most of our results in the following \Cref{tab:summary_regret}.


\begin{table}[h!]
\centering
\small
\renewcommand{\arraystretch}{1.5}
\setlength{\tabcolsep}{3pt}

\begin{tabular}{@{}lcccccc@{}}
\toprule
\multirow{2}{*}{\textbf{Setting}} & 
\multicolumn{2}{c}{\textbf{\begin{tabular}[c]{@{}c@{}}Conditionally\\ Independent\end{tabular}}} & 
\multicolumn{2}{c}{\textbf{\begin{tabular}[c]{@{}c@{}}Blackwell-Ordered\end{tabular}}} & 
\multicolumn{2}{c}{\textbf{\begin{tabular}[c]{@{}c@{}}General \end{tabular}}} \\ 
\cmidrule(lr){2-3} \cmidrule(lr){4-5} \cmidrule(lr){6-7}

 & \textbf{LB} & \textbf{UB} & 
 \multicolumn{2}{c}{\textbf{Tight Bound}} & 
 \multicolumn{2}{c}{\textbf{Tight Bound}} \\ 
\midrule

\begin{tabular}[c]{@{}l@{}}Known state space\\ $\statespace=\{0, 1\}$\end{tabular} & 
$\dfrac{5\sqrt{5}-11}{8}\primed$ & 
\begin{tabular}[c]{@{}c@{}}
    $0.022599$ \\ 
    \scriptsize{(\Cref{prop:known_binary_upper_bound})}
\end{tabular} & 
\multicolumn{2}{c}{$\dfrac{5\sqrt{5}-11}{8}$} & 
\multicolumn{2}{c}{$0.25$} \\ 
\midrule

\begin{tabular}[c]{@{}l@{}}Unknown  state space\end{tabular} & 
\begin{tabular}[c]{@{}c@{}}
    $\dfrac{31}{1326}\doubleprimed$ \\ 
    \scriptsize{(\Cref{thm:regret lb unknown})}
\end{tabular} & 
\begin{tabular}[c]{@{}c@{}}
    $0.025512$ \\ 
    \scriptsize{(\Cref{thm:regret ub unknown})}
\end{tabular} & 
\multicolumn{2}{c}{\begin{tabular}[c]{@{}c@{}}
    $\dfrac{5\sqrt{5}-11}{8}$ \\ 
    \scriptsize{(\Cref{pro:blackwell_lowerupperbound_unknown})}
\end{tabular}} & 
\multicolumn{2}{c}{\begin{tabular}[c]{@{}c@{}}
    $0.25$ \\ 
    \scriptsize{(\Cref{pro:general_lbub_unknow})}
\end{tabular}} \\ 
\midrule

\begin{tabular}[c]{@{}l@{}}Unknown state space\\ Known marg. predic. dist.\end{tabular} & 
\begin{tabular}[c]{@{}c@{}}
    $\dfrac{5\sqrt{5}-11}{8}$ \\ 
    \scriptsize{(\Cref{thm:regret lb unknown w marg})}
\end{tabular} & 
\begin{tabular}[c]{@{}c@{}}
    $0.022763$ \\ 
    \scriptsize{(\Cref{pro:con_ub_knowmarg})}
\end{tabular} & 
\multicolumn{2}{c}{$0$} & 
\multicolumn{2}{c}{\begin{tabular}[c]{@{}c@{}}
    $0.25$ \\ 
    \scriptsize{(\Cref{pro:general_lbub_knowmarg})}
\end{tabular}}  \\ 
\bottomrule
\multicolumn{7}{l}{\footnotesize $\primed \approx 0.022542$; \quad $\doubleprimed \approx 0.023379$.}
\end{tabular}
\caption{Summary of regret bounds. Baseline results are rounded to 4 decimal places; our results are reported to 6 decimal places. All rounding follows the standard nearest value rule.}
\label{tab:summary_regret}
\end{table}

\subsection{Related Work}

Our work closely relates to the literature on robust forecast aggregation, which aims to find aggregators that perform well in the worst case across all possible information structures. 
The foundational question of designing such an aggregator was first studied in \citet{ABS-18}.
Focusing on the two-expert setting with a binary state space $\{0, 1\}$ and conditionally independent signals, they propose low-regret aggregation rules and establish corresponding upper and lower bounds on the additive regret. 
Specifically, they show that an average prior scheme achieves a worst-case regret of $0.0260$, while proving a lower bound of 
$\frac{5\sqrt{5}-11}{8}\approx 0.0225$.
Building on this setting, 
recent work by \citet{GHHKSY-25} develop an efficient algorithmic framework that can achieve a regret of $0.0226$.
These two works are the closest to ours, since we also focus on conditionally independent signals. Our contribution differs in that we study an unknown state-space setting and show that previously proposed aggregation schemes can perform poorly in this regime.

Complementing these approaches, several studies consider settings in which the aggregator is endowed with additional information.
\citet{LR-22} study robust prediction when the aggregator understands each information source in isolation but is uncertain about the correlation across sources.
 
\citet{DIL-21} study the robust decision making under uncertain correlations among information sources where the uncertainty lies solely in the correlation structure among information sources.
\citet{LR-21} propose a maximum-likelihood method for combining multiple forecasts (given a prior) that rationalizes the observed forecasts via the most likely underlying information structure.

Beyond additive regret, other works explore alternative robustness criteria. 
For example, \citet{NR-22,FMNW-25} study a ratio-based robustness notion for a class of information structures they term projective substitutes.
\citet{ABTZ-25} study binary decision aggregation problem under different standard robustness notions, including maximin, regret minimization, and approximation ratio, and establish that the simple random dictator rule (which selects one agent's recommendation uniformly at random) is the unique optimal mechanism when the number of agents is large.

More recently, the literature has broadened its scope to account for behavioral or strategic deviations by experts. 
For example, \citet{KWW-24} study aggregation where experts exhibit base rate neglect.
\citet{GK-25} consider settings with adversarial experts and characterize the robustly optimal aggregation mechanisms under various loss functions.
Complementary to this strand, another line of research leverages second-order information, e.g., agents' beliefs about others’ answers, to improve aggregation performance.
This line of work is initiated with Bayesian Truth Serum \citep{P-04} and the ``surprisingly popular'' rule \citep{PSM-17}, and has more recently developed a broader toolkit that uses second- and higher-order belief elicitation to aggregate opinions in finite populations, including settings with heterogeneous agent types (see, e.g., \citealp{PS-19,WLC-21,WMH-22,PCK-24,APSTX-25}).


In addition to aggregating forecasts in a robust manner, forecast aggregation has traditionally been also studied through two complementary lenses: axiomatic and Bayesian. The axiomatic literature posits normative desiderata for aggregation rules, e.g., unanimity preservation and variants of independence, 
and characterizes the (often essentially unique) rules that satisfy them, with linear pooling (averaging) emerging as a canonical solution under standard axioms (see, e.g., \citealp{AW-80,G-84,DL-16}). 
In contrast, the Bayesian literature models how forecasts are generated from underlying signals and prescribes aggregation by Bayesian updating; most results in this tradition are parametric, optimizing aggregation within a specified family of information structures (e.g., common-knowledge correlation patterns or exponential-family sampling models), which can limit robustness when the true dependence structure is misspecified (see, e.g., \citealp{SBFMTU-14,FCK-15,EPSU-16,SPU-16}).


\section{Preliminary}
\label{sec:prelim}

We consider a stochastic environment where the world randomly generates a \emph{state} $\state$ from a finite state space $\statespace$.
In this work, we focus on a binary state space, so that $|\statespace| = 2$.
Once a state is realized, it further induces a random \emph{binary outcome}, which is unobservable to all players. 
Specifically, for each state $i \in [2]$, the binary outcome $\outcome \in \outcomespace \triangleq \{0, 1\}$ is drawn from a Bernoulli distribution with mean $\state_i \in [0, 1]$.  
Without loss of generality, we assume that the states are sorted in non-decreasing order of their Bernoulli means: $\state_1 \le \state_2$.
Since each state can be identified by its Bernoulli mean, 
we henceforth refer to the states directly as  $\state_1$ and $\state_2$, and write the state space as $\statespace = \{\state_1, \state_2\}$.
We denote by $\prior_i \in [0, 1]$ the prior probability of the state $\state_i$ for $i\in[2]$ \cpedit{and let $\mu$ be the prior mean of the state, i.e., $\priormean \triangleq \expect[\state\sim\prior]{\state}$}. 

\xhdr{Information structure}
There are two experts, each receiving a private signal that provides information about the current world state. 
We use an {\em information structure} to characterize the relationship between the experts' {\em private} signals and the underlying states. 
Formally, an information structure for two experts is a tuple $(\signalspace, \inforstructure)$, where $\signalspace = \signalspace_1 \times \signalspace_2$ is a $2$-dimensional signal space and $\inforstructure\in\Delta(\statespace\times \signalspace)$ is a joint distribution over states and signal profiles. 
Each expert $i\in[2]$ receives a private signal $\signal_i\in\signalspace_i$ drawn according to $\inforstructure$.
Upon receiving their private signal $\signal_i$ and knowing the information structure $\inforstructure$, each expert $i$ provides his forecast $\report_i(\signal_i \mid \inforstructure)$ about the underlying outcome $\outcome$. 
These forecasts are their Bayesian posterior for outcome: since the outcome space is binary, each expert's forecast is simply their conditional probability that $\outcome = 1$, i.e., 
\begin{align*}
    \report_i(\signal_i \mid \inforstructure) \triangleq 
    \prob[\inforstructure]{\outcome = 1 \mid  \signal_i}~.
\end{align*}

\xhdr{Aggregation and loss}
An aggregator, who does not observe the signals or the true state, must form her own forecast about the underlying outcome based solely on the experts' reported forecasts $\report(\signal) = (\report_1(\signal_1), \report_2(\signal_2))$.
Unlike in prior work, the aggregator is both information-structure-agnostic and prior-agnostic: she knows only that the state space is binary but does not know the specific values of $\state_1$ and $\state_2$, nor the prior probabilities $\prior_1, \prior_2$, nor the information structure. 
More formally, the aggregator uses a deterministic aggregation function $\aggregator: [0, 1]^2 \rightarrow [0, 1]$ that is a mapping from the two experts' forecasts $(\report_1, \report_2)\in[0, 1]^2$ to her own forecast.

We evaluate the aggregator's performance using a loss function $\loss{\cdot, \cdot}: [0, 1] \times \outcomespace\rightarrow\reals_+$ where $\loss{\report, \outcome} \in \reals_+$ quantifies the loss when the induced outcome is $\outcome\in\outcomespace$ and the aggregator's prediction is $\report \in [0, 1]$.
Throughout this work, we adopt the squared loss  $\loss{\report, \outcome} = (\report - \outcome)^2$ as our loss function.

We benchmark the aggregator's performance against the omniscient (Bayesian optimal) aggregator, who knows the full model: the exact state space $\statespace = \{\state_1, \state_2\}$, the prior $\prior$, the information structure $\inforstructure$, and observes all the signals of all the experts.
With the square loss function, the omniscient aggregator's optimal forecast is her Bayesian posterior expectation of $\outcome$ 
given the received reports from the experts and the underlying information structure. 
We denote by $\bayesaggregator: [0, 1]^2 \rightarrow [0, 1]$ such Bayesian optimal aggregation: 
\begin{align*}
    \bayesaggregator(\report(\signal)) 
    \triangleq \prob[\inforstructure]{\outcome = 1\mid \signal}
\end{align*}
where $\signal = (\signal_1, \signal_2)$  is the full signal profile.
Note that while $\bayesaggregator$ is a function of the signals $\signal$, it can be equivalently viewed as a function of the reported forecasts $\report(\signal)$, since the reports are deterministic functions of the signals under a fixed $\inforstructure$.

\xhdr{Prior-agnostic robust forecasting aggregation problem}
With these definitions, we are ready to formulate the aggregator's prior-agnostic robust forecasting problem.

Given a state space $\statespace\subset [0, 1]^2$ and an information structure $\inforstructure\in\Delta(\statespace\times\signalspace)$, 
the expected loss of the aggregation function $\aggregator$ is
\begin{align*}
    \avgloss{\aggregator \mid \statespace, \inforstructure}
    \triangleq 
    \expect[(\state, \signal)\sim \inforstructure, \outcome\sim \Bernoulli(\state)]{
    \loss{\aggregator(\report(\signal)), \outcome} - 
    \loss{\bayesaggregator(\report(\signal)), \outcome}}~.
\end{align*}
Here $\Bernoulli(\state)$ denotes a Bernoulli distribution with the mean $\state$. 

Following the prior work (see, e.g., \citealp{ABS-18,GHHKSY-25}), we allow the aggregator to have partial knowledge of the information structure, modeled as a known class $\inforstructureClass$ of possible structures. 
The prior-agnostic robust forecasting problem solves the following the minimax problem, which aims to find an aggregation function $\aggregator$ that minimizes this worst-case regret over all admissible state spaces and information structures in $\inforstructureClass$:
\begin{align*}
    \min\nolimits_{\aggregator} \; 
    \Reg[\inforstructureClass]{\aggregator}~.
\end{align*}
where $\Reg[\inforstructureClass]{\aggregator}  \triangleq \sup\nolimits_{\statespace\subset [0, 1]^2, \inforstructure\in \inforstructureClass} \; \avgloss{\aggregator \mid \statespace, \inforstructure}$.
When the state space $\statespace$ is known and fixed to the aggregation and set to be $\{0, 1\}$, then our prior-agnostic robust forecasting problem reduces to the robust forecasting aggregation problem studied in \citet{ABS-18,GHHKSY-25}.

\xhdr{An exemplification of our framework}
We illustrate the prior-agnostic robust forecasting problem through an emergency-department triage setting,
where a \emph{human} decision maker aggregates two probabilistic risk assessments.

Consider a human clinician (the aggregator) who must combine two probability forecasts from two sources (the experts), e.g., an AI risk score and a specialist's judgment, to assess whether a patient will experience an adverse event. The latent state \(\state\in\statespace=\{\state_1,\state_2\}\) represents the patient's underlying risk regime (low vs.\ high), and the realized binary outcome \(\outcome\in\outcomespace=\{0,1\}\) indicates whether the adverse event occurs. Each expert \(i\in\{1,2\}\) observes a private signal \(\signal_i\in \signalspace_i\) (such as model-specific features or clinical cues) and reports a forecast \(\report_i\in[0,1]\); the aggregator observes only the pair \((\report_1,\report_2)\) and outputs a final forecast using an aggregation rule.

In practice, the clinician typically does not know the numerical values of the baseline risks \(\state_1,\state_2\) (they vary across hospitals, populations, protocols, and over time), nor the prior likelihood of each regime (e.g., case mix shifts during an outbreak). Moreover, she rarely knows the information structure linking the experts’ private signals to the latent state: the two sources may share data, overlap in cues, or change after retraining or workflow updates, creating unknown dependence and calibration patterns. The prior-agnostic robust forecasting problem therefore asks for an aggregation rule that performs well against an omniscient benchmark uniformly over a plausible class of such information structures.

\xhdr{The expected loss simplification}
With the squared loss, we can simplify the expected loss of
the aggregation function $\avgloss{\aggregator \mid \statespace, \inforstructure}$ as follows:
\begin{restatable}[Regret simplification \citealp{ABS-18,KWW-24,GHHKSY-25}]{lemma}{lemregretequivalence}
\label{lem:regret_equivalence}
Given any state space $\statespace \subset [0, 1]^2$ and any information structure $\inforstructure$, we have that
\begin{align*}
    \avgloss{\aggregator \mid \statespace, \inforstructure}
    &= 
    \expect[(\state, \signal)\sim \inforstructure, \outcome\sim \Bernoulli(\state)]{\left(\aggregator(\report(\signal)) - \bayesaggregator(\report(\signal))\right)^2}~.
\end{align*}
\end{restatable}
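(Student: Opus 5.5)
The plan is to expand the difference of squared losses pointwise and kill the cross term using the tower property, conditioning on the full signal profile $\signal$. Write $a \triangleq \aggregator(\report(\signal))$ and $b \triangleq \bayesaggregator(\report(\signal))$. Both are deterministic functions of $\signal$ once $\inforstructure$ is fixed; recall from the preliminaries that $\report(\signal)$ is a deterministic function of $\signal$. For every realization we have the identity
\begin{align*}
(a-\outcome)^2-(b-\outcome)^2 = (a-b)^2 + 2(a-b)(b-\outcome),
\end{align*}
which follows from writing $a-\outcome=(a-b)+(b-\outcome)$ and expanding.

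Next I will take expectations over $(\state,\signal)\sim\inforstructure$ and $\outcome\sim\Bernoulli(\state)$. It then suffices to show
\begin{align*}
\expect{(a-b)(b-\outcome)}=0.
\end{align*}
Condition on $\signal$. Since $a-b$ is $\signal$-measurable, it factors out, and the inner expectation becomes $b-\expect{\outcome\mid\signal}$. By the tower property over $\state$,
\begin{align*}
\expect{\outcome\mid\signal}=\expect{\state\mid\signal}=\prob[\inforstructure]{\outcome=1\mid\signal},
\end{align*}
using that $\outcome$ is drawn from $\Bernoulli(\state)$ independently of $\signal$ given $\state$. This equals $b$ by the definition of $\bayesaggregator$. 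So the conditional expectation vanishes for every $\signal$ with positive probability, and hence so does the unconditional one.

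The only point needing care is that the aggregator's forecast and the Bayesian benchmark must both be measurable with respect to the signal profile. Equally, the outcome must be generated conditionally on $\state$ alone, so that $\prob[\inforstructure]{\outcome=1\mid \signal}=\expect{\state\mid\signal}$. Both hold by the model's construction, so there is no real obstacle. Boundedness of all quantities in $[0,1]$ justifies exchanging expectations. The argument is independent of whether $\statespace=\{0,1\}$, which is why the known-state result of \citet{ABS-18} carries over verbatim.
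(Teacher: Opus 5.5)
Your proof is correct and follows the same route as the paper: you expand the difference of squared losses around $\bayesaggregator$, then kill the cross term by conditioning on the full signal profile $\signal$, using $\expect{\outcome\mid\signal}=\expect{\state\mid\signal}=\bayesaggregator$ via the tower property. For non-discrete signal spaces, read ``for every $\signal$ with positive probability'' as ``almost surely''; this changes nothing in the argument.
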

The proof is deferred to \Cref{apx:proof prelim}.

\section{A Robust and Prior-Agnostic Aggregation Strategy}
\label{sec:aggregation}

\newcommand{\distset}{\mathcal{D}}
\newcommand{\dist}{D}
\newcommand{\predicmarg}{P}
\newcommand{\condiopera}{\circ}
\newcommand{\statediff}{\Delta}
\newcommand{\auxbinary}{\delta}
\newcommand{\posteriorprob}{p}
\newcommand{\generalizedlogodds}{\ell}

In this section, we focus on a widely studied and economically relevant class of information structures (see, e.g., \citealp{ABS-18,KWW-24,GHHKSY-25}) in which experts' signals are independent conditional on the realized state.
We refer to such structures as conditionally independent (CI), and denote by $\inforstructureClassCI$ the set of all CI information structures. 
Formally, for a conditionally independent information structure $\inforstructure \in \Delta(\statespace \times \signalspace)$ with $\signalspace = \signalspace_1 \times \signalspace_2$, we say $\inforstructure$ is CI if for every state $\state\in\statespace$ and every signal profile $(\signal_1, \signal_2)\in\signalspace$, we have
\begin{align*}
    \inforstructure(\signal_1, \signal_2\mid \state)
    = 
    \inforstructure_1(\signal_1 \mid \state) \times 
    \inforstructure_2(\signal_2 \mid \state)~,
\end{align*}
where $\inforstructure_i(\cdot \mid \state)$ is the conditional signal distribution for expert $i$.
When convenient, we write $\inforstructure = \inforstructure_1 \condiopera \inforstructure_2$ to emphasize this conditional-independence factorization.

\subsection{A Log-Odds Aggregation Function}
We propose the following family of aggregation functions:

\begin{definition}[Log-odds aggregation function]
\label{defn:prior-agnostic aggregation scheme}
Fix a parameter $\expertoneweight\in [0, 1]$, 
a {\em log-odds aggregation scheme} is a function $\aggregator_{\expertoneweight} (\cdot, \cdot): [0, 1]^2 \rightarrow [0, 1]$ mapping from the two experts' reports $(\report_1, \report_2)\in[0, 1]^2$ to a forecast as follows:
\begin{align*}
    \aggregator_{\expertoneweight}(\report_1, \report_2) 
    = \frac{1}{\displaystyle 
    1 + \left(\frac{1 - \report_1}{\report_1}\right)^\expertoneweight 
    \cdot \left(\frac{1 - \report_2}{\report_2}\right)^\expertoneweight
    }~.
\end{align*}
Equivalently, let $\logit(p) \triangleq \ln\left(\frac{p}{1-p}\right)$ denote the log-odds function of probability $p\in[0, 1]$, aggregator $\aggregator_{\expertoneweight}$ satisfies that 
\begin{align*}
    \logit(\aggregator_{\expertoneweight}(\report_1, \report_2)) 
    = \expertoneweight \cdot \logit(\report_1) 
    + \expertoneweight \cdot \logit(\report_2)~.
\end{align*}
\end{definition}
To obtain intuitions behind the above aggregation function, we start with characterizing the omniscient Bayesian forecast under the conditionally independent information structure.
Fix a state space $\statespace = \{\state_1, \state_2\}$ with the prior mean $\priormean = \expect[\state\sim\prior]{\state}$:
\begin{align}
\label{eq:bayesbenchmark}
    \bayesaggregator(\report_1, \report_2) 
    & = \prob[\inforstructure]{\outcome = 1\mid \report_1, \report_2} \notag \\
    & = 
    \expect[\outcome\sim \Bernoulli(\state_1)]{\outcome} \cdot \prob[\inforstructure]{\state = \state_1\mid \report_1, \report_2} 
    + 
    \expect[\outcome\sim \Bernoulli(\state_2)]{\outcome} 
    \cdot \prob[\inforstructure]{\state = \state_2\mid \report_1, \report_2} \notag \\
    & = \state_1 \cdot \left(1 - 
    \frac{1}{1 + \frac{\priormean - \state_1}{\state_2 - \priormean} \cdot \frac{\state_2 - \report_1}{\report_1 - \state_1} \cdot \frac{\state_2 - \report_2}{\report_2 - \state_1}}
    \right) 
    + 
    \state_2 \cdot \frac{1}{1 + \frac{\priormean - \state_1}{\state_2 - \priormean} \cdot \frac{\state_2 - \report_1}{\report_1 - \state_1} \cdot \frac{\state_2 - \report_2}{\report_2 - \state_1}}~.
\end{align}
As we can see, this expression highlights two key structural features. First, the joint posterior depends on the two reports only through the product of two generalized odds ratios 
\begin{align*}
    \frac{\state_2 - \report_i}{\report_i - \state_1}~,
\end{align*}
which quantifies how strongly expert $i$'s report tilts beliefs toward the low state relative to the high state. Second, under conditional independence, the experts' evidence combines multiplicatively in odds space: the Bayesian update multiplies their odds contributions and then applies a logistic-type transformation (i.e., the outer $\frac{1}{1+\cdot}$ transformation). 

\xhdr{An additive log-odds representation of Bayesian forecast}
Recall that for any probability $p \in (0, 1)$, the log-odds (or logit) is given by $\logit(p) \triangleq \ln\left(\frac{p}{1-p}\right)$.
Then the omniscient posterior probability of the high state satisfies
\begin{align*}
\logit\left(\prob{\state=\state_2\mid \report_1,\report_2}\right)=\generalizedlogodds_{\state_1,\state_2}(\report_1)+\generalizedlogodds_{\state_1,\state_2}(\report_2)-\ln\frac{\priormean - \state_1}{\state_2 - \priormean}
\end{align*}
where $\generalizedlogodds_{\state_1,\state_2}(\report)
\triangleq 
\ln\frac{\report - \state_1}{\state_2 - \report}$ is the generalized log-odds induced by the endpoints $\state_1 and \state_2$.
Thus, Bayesian aggregation under conditional independence amounts to linear pooling in log-odds: 
the aggregator adds the experts' log-odds contributions (plus the prior log-odds), and then maps back via the logistic function. 

Our aggregation function in \Cref{defn:prior-agnostic aggregation scheme} implements the same linear-in-log-odds principle on the canonical $[0,1]$ scale by taking $\state_1 \gets 0, \state_2 \gets 1$.
In this sense, $\aggregator_\expertoneweight$ can be viewed as a symmetric ``tempered Bayes'' rule: it still adds evidence on the log-odds scale, but uses a parameter $\expertoneweight$ to control the strength of aggregation.

\xhdr{Why $\aggregator_\expertoneweight$ may work for unknown $\statespace$}
When the state space $\statespace=\{\state_1,\state_2\}$ is unknown, an aggregator cannot form the correct generalized odds $\frac{\report-\state_1}{\state_2-\report}$ nor the correct prior term $\ln\frac{\priormean}{1-\priormean}$. 
In other words, the exact Bayesian forecast is unavailable because the aggregator cannot normalize a report $\report$ relative to the true endpoints (i.e., $\state_1, \state_2$). 
The key observation from the Bayesian formula above is that what matters is not the endpoints per se, but a ``monotone transformation'' that turns reports into comparable units of evidence and then adds them.
The log-odds aggregator provides a robust surrogate for this transformation: 
it uses the universal mapping $\report \mapsto \ln\frac{\report}{1-\report}$, which preserves the essential ordering 
\begin{center}
    \emph{``higher reports $\Rightarrow$ stronger evidence for the high state,''}
\end{center}
and combines two experts by adding their evidence due to conditional independence.
Moreover, the parameter $\expertoneweight\in[0,1]$ plays a central robustness role. 
While the canonical scale preserves the ordering of evidence, simply summing logits (i.e., $\expertoneweight=1$) fails to subtract the unknown prior log-odds, effectively double-counting the prior information.\footnote{For binary states $\{\state_1,\state_2\}$, expert $i$'s report $\report_i=\expect{\state\mid \signal_i}$ decomposes as $\generalizedlogodds_{\state_1,\state_2}(\report_i) = \generalizedlogodds_{\state_1,\state_2}(\priormean) + \ln(\inforstructure_i(\signal_i\mid \state_2)/\inforstructure_i(\signal_i\mid \state_1))$, where $\generalizedlogodds_{\state_1,\state_2}(z)\triangleq\ln\frac{z-\state_1}{\state_2-z}$.
Thus, the prior log-odds $\generalizedlogodds_{\state_1,\state_2}(\priormean)$ is embedded in each report. Under conditional independence, the joint posterior log-odds satisfies
\begin{align*}
    \generalizedlogodds_{\state_1,\state_2}(\bayesaggregator) = \generalizedlogodds_{\state_1,\state_2}(\priormean) + \sum\nolimits_{i} \ln(\inforstructure_i(\signal_i\mid \state_2)/\inforstructure_i(\signal_i\mid \state_1)) = \generalizedlogodds_{\state_1,\state_2}(\report_1) + \generalizedlogodds_{\state_1,\state_2}(\report_2) - \generalizedlogodds_{\state_1,\state_2}(\priormean)~.
\end{align*}
Summing log-odds \emph{without} subtracting $\generalizedlogodds_{\state_1,\state_2}(\priormean)$ thus double-counts the prior.
Numerically, if $(\state_1,\state_2)=(0,1)$, $\priormean=0.8$ (odds $4$), and $\report_1=\report_2=\sfrac{8}{9}$ (odds $8$), the naive sum yields odds $64$ ($p\approx 0.985$), whereas the correct Bayesian update yields odds $16$ ($p\approx 0.941$).}
Additionally, full summation implicitly assumes perfect conditional independence, which may not hold in practice.
Hence $\expertoneweight$ can be interpreted as an ``influence weight'' or ``trust weight'' on each expert: 
higher $\expertoneweight$ implies higher confidence in the independence and informativeness of the signals relative to the prior; lower $\expertoneweight$ is more conservative, correcting for the double-counting of the prior and potential correlation between experts. 
Importantly, this rule remains implementable without knowing the primitives: it requires only the reported forecasts $(\report_1,\report_2)$ and a single parameter $\expertoneweight$, and still mirrors the Bayesian logic that independent evidence should be combined by multiplying odds.

\subsection{Establishing the Regret Bounds}
In this section, we first establish that there exists a log-odds aggregator that achieves a regret upper bound of \wtedit{$0.025512$}, and we then establish a regret lower bound of $0.02344$ for our prior-agnostic robust forecasting problem.

\xhdr{Upper bound analysis}
We first present the regret upper bound of our proposed aggregation function. 
\begin{theorem}
\label{thm:regret ub unknown}
With $\expertoneweight = \wtedit{0.585}$, the aggregation function $\aggregator_{\expertoneweight}$ guarantees worst-case regret at most $0.025512$; that is,
\begin{align*}
    \Reg[\inforstructureClassCI]{\aggregator_{0.585}}
    \approx 0.025512 < 0.0256~.
\end{align*}
\end{theorem}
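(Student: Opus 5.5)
The plan is to reduce the supremum over all CI information structures and state spaces to a low-dimensional optimization, and then certify that optimization numerically. By \Cref{lem:regret_equivalence}, the regret is $\expect{(\aggregator_{\expertoneweight}(\report_1,\report_2)-\bayesaggregator(\report_1,\report_2))^2}$, where the expectation is over the joint distribution of reports. Under CI with a binary state space $\{\state_1,\state_2\}$ and prior mean $\priormean$, this joint distribution factorizes given the state.

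\textbf{Step 1: reparametrize.} Each expert's report distribution $P_i$ is any distribution on $[\state_1,\state_2]$ with mean $\priormean$ (Bayes plausibility). The conditional laws are recovered from $P_i$ by
\[
\dd P_i(\report\mid\state_2)=\frac{\report-\state_1}{\priormean-\state_1}\,\dd P_i(\report),
\qquad
\dd P_i(\report\mid\state_1)=\frac{\state_2-\report}{\state_2-\priormean}\,\dd P_i(\report).
\]
Consequently, the joint report density is
\[
\dd P_1\,\dd P_2\cdot\Bigl[\prior_2\,\tfrac{(\report_1-\state_1)(\report_2-\state_1)}{(\priormean-\state_1)^2}+\prior_1\,\tfrac{(\state_2-\report_1)(\state_2-\report_2)}{(\state_2-\priormean)^2}\Bigr].
\]
The regret is therefore bilinear in $(P_1,P_2)$: it has the form $\iint g(\report_1,\report_2)\,\dd P_1\dd P_2$, where $g$ is explicit through \eqref{eq:bayesbenchmark}.

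\textbf{Step 2: extreme-point reduction.} For fixed $(\state_1,\state_2,\priormean)$ and fixed $P_2$, the regret is linear in $P_1$ over the convex set of mean-$\priormean$ distributions. Its extreme points are supported on at most two points, so the supremum is attained at two-point $P_1$, and symmetrically at two-point $P_2$. The worst case thus lives in finitely many parameters: $\state_1\le\state_2$, $\priormean$, and for each expert two support points $a_i\le\priormean\le b_i$ within $[\state_1,\state_2]$, with weights fixed by the mean constraint. That is seven real parameters on a compact domain, on which the regret is an explicit rational-plus-power function.

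\textbf{Step 3: certify the bound.} Evaluate the supremum at $\expertoneweight=0.585$ numerically and show it is at most $0.025512<0.0256$. The main obstacle is making this rigorous rather than heuristic. The plan has three parts.
\begin{enumerate}
\item Exploit symmetry (swap the experts; the map $\report\mapsto1-\report$ together with $\state\mapsto1-\state$ leaves $\aggregator_{\expertoneweight}$ invariant) to shrink the domain.
\item Handle the boundary behaviour where reports approach $0$ or $1$ and $\aggregator_{\expertoneweight}$ has unbounded derivatives. There I would use direct analytic bounds, since squared errors are at most $1$ and the probability mass there must vanish.
\item On the remaining compact region, use a Lipschitz or interval-arithmetic grid argument: bound the gradient of the regret on each cell, and check that the grid maximum plus the Lipschitz slack stays below $0.0256$.
\end{enumerate}
In the spirit of the paper's footnote, I would first run a global heuristic search to locate the worst-case instances, then refine on a fine grid near them to pin the value $0.025512$. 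I expect the hardest part to be controlling the Lipschitz constants near the degenerate configurations where $\priormean$ approaches $\state_1$ or $\state_2$, or where $a_i$ approaches $\state_1$.
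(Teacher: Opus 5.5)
Your proposal is correct and follows the paper's route: the paper's reduction-to-binary-signals lemma uses the same bilinearity and extreme-point argument, with one difference in setup. The paper first normalizes affinely to the state space $\{0,1\}$, whereas you work directly with mean-$\mu$ report distributions on $[\theta_1,\theta_2]$. The paper then simply maximizes the resulting seven-parameter regret numerically at $\alpha=0.585$, without the rigorous Lipschitz or interval-arithmetic certification you sketch.

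If you do carry out that certification, one justification needs fixing: report mass near $0$ or $1$ need not vanish. For example, with $\theta_1=0$, $\theta_2=1$ and a fully revealing signal, mass $1-\mu$ sits exactly at report $0$. So the boundary estimate must bound the product of probability weight and squared error, not rely on the mass going to zero.
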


We prove \Cref{thm:regret ub unknown} in two steps:
Step 1 shows that, without loss of generality, it suffices to consider environments in which each expert's information structure generates at most two possible signals. 
Step 2 leverages this reduction to reformulate the worst-case regret of $\aggregator_{\expertoneweight}$ as a finite-dimensional optimization problem involving only \cpedit{seven} parameters. 
Optimizing over these parameters, together with $\expertoneweight$, yields a computable regret upper bound, which we solve numerically to obtain the stated guarantee.

\begin{restatable}[Reduction to binary signals]{lemma}{lemreductionbinary}
\label{lem:reduction_binary}
Fix any state space $\statespace = \{\state_1, \state_2\}$, given any aggregation function $\aggregator$, we have
\begin{align*}
    \sup\nolimits_{\inforstructure\in\inforstructureClassCI} \avgloss{\aggregator\mid \statespace, \inforstructure}
    \;\le\;
    \sup\nolimits_{\inforstructure\in\inforstructureClassCI: |\supp(\inforstructure_i)| \le 2, \forall i\in [2]} \avgloss{\aggregator\mid \statespace, \inforstructure}~.
\end{align*}
That is, the worst case regret can be achieved for the information structures that at most two signals are generated for each expert.
\end{restatable}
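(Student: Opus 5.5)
The plan is to fix the state space $\statespace=\{\state_1,\state_2\}$ and the prior $\prior$, and reduce one expert at a time. The point is that, once the other expert's structure is fixed, the loss is a linear functional of one expert's distribution of posteriors. That distribution ranges over a convex set cut out by a single linear (Bayes-plausibility) constraint, whose extreme points have at most two support points.

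\textbf{Step 1: relabel signals by posteriors.} Fix a CI structure $\inforstructure=\inforstructure_1\condiopera\inforstructure_2$. For expert 1, set $q_1(\signal_1)\triangleq\prob[\inforstructure]{\state=\state_2\mid\signal_1}$. Then $\report_1=\state_1+q_1(\state_2-\state_1)$.

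Merging all signals of expert 1 with the same $q_1$ changes nothing. Under CI, the likelihood ratio $\inforstructure_1(\signal_1\mid\state_2)/\inforstructure_1(\signal_1\mid\state_1)$ is a function of $q_1$. Hence the joint posterior of $\state$ given $(\signal_1,\signal_2)$, and so $\bayesaggregator$, depends on $\signal_1$ only through $q_1$, and the report $\report_1$ is unchanged. So expert 1's structure is encoded by the distribution $\tau_1$ of $q_1$ on $[0,1]$. By Bayes plausibility, $\tau_1$ has mean $\prior_2$, and conversely every such $\tau_1$ arises from some CI structure.

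\textbf{Step 2: linearity in $\tau_1$.} By \Cref{lem:regret_equivalence},
\[
\avgloss{\aggregator\mid\statespace,\inforstructure}=\mathbb{E}_{q_1\sim\tau_1}\bigl[g(q_1)\bigr].
\]
Here $g(q)$ is the expectation, over $\signal_2$, of $(\aggregator(\report_1,\report_2)-\bayesaggregator)^2$, where $\signal_2$ has conditional law $q\,\inforstructure_2(\cdot\mid\state_2)+(1-q)\,\inforstructure_2(\cdot\mid\state_1)$.

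The key check is that $g$ depends on $\signal_1$ only through $q$, with $\inforstructure_2$ fixed. This holds because three quantities are functions of $q$ and $\signal_2$ alone: the conditional law of $\signal_2$ given $\signal_1$, the report $\report_1$, and the posterior $\prob{\state=\state_2\mid\signal_1,\signal_2}=\frac{q\,\inforstructure_2(\signal_2\mid\state_2)}{q\,\inforstructure_2(\signal_2\mid\state_2)+(1-q)\,\inforstructure_2(\signal_2\mid\state_1)}$.

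Thus the loss is linear in $\tau_1$ over the convex set $\mathcal{T}=\{\tau\in\Delta([0,1]):\mathbb{E}_\tau[q]=\prior_2\}$.

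\textbf{Step 3: extreme-point reduction.} Every $\tau\in\mathcal{T}$ is a mixture of distributions in $\mathcal{T}$ with at most two support points: pair mass below $\prior_2$ with mass above $\prior_2$, in the usual way; an atom at $\prior_2$ is itself such a distribution. Therefore $\mathbb{E}_\tau[g]\le\sup_{\tau'\in\mathcal{T},|\supp\tau'|\le2}\mathbb{E}_{\tau'}[g]$. Replacing $\inforstructure_1$ by the corresponding binary-signal structure does not decrease the loss.

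Then apply the same argument to expert 2, with the now-binary $\inforstructure_1$ held fixed. By symmetry of the argument, the loss is again linear in expert 2's posterior distribution. Taking the supremum over structures gives the claimed inequality.

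\textbf{Main obstacle.} The delicate part is Step 3 for general (possibly non-finite) signal spaces, where the mixture decomposition must be made rigorous. I would first prove it for finitely supported $\tau_1$ by an explicit pairing argument. I would then extend it either by approximation or by the disintegration $\tau=\int\tau_{a,b}\,\dd\rho(a,b)$, where $\tau_{a,b}$ is the two-point law on $a\le\prior_2\le b$ with mean $\prior_2$, together with measurability and boundedness of $g$ (note $g\le1$).

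Degenerate cases also need handling: $\state_1=\state_2$, where everything is trivial, and $q\in\{0,1\}$, where the posterior formula has to be read via the limits.
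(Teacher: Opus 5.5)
Your proposal is correct and follows essentially the same route as the paper. The paper likewise identifies each expert's signal with its posterior on the high state (after an affine normalization of $\statespace$ to $\{0,1\}$), writes the loss as a bilinear functional of the two experts' posterior distributions, and uses the fact that the set of mean-$\prior_2$ measures on $[0,1]$ has extreme points supported on at most two points. Your one-expert-at-a-time version with an explicit two-point mixture decomposition is, if anything, slightly more careful than the paper's bare appeal to extreme points: for an arbitrary $\aggregator$ that appeal needs extra justification, whereas your argument needs only boundedness and measurability of $g$.
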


\begin{proof}
Fix any state space $\statespace = \{\state_1, \state_2\}$.
Define $\statediff \triangleq \state_2 - \state_1$.

If $\statediff=0$, then the prior distribution is degenerate and we have $\bayesaggregator\equiv \state_1$, so the lemma statement is trivial. 
Hence we may assume $\statediff>0$.

Define an indicator variable $\auxbinary\in\{0,1\}$ such that $\state = \state_1+\statediff\cdot \auxbinary$.
For each expert, consider its posterior probability
\begin{align*}
    \posteriorprob_i(\signal_i)\triangleq 
    \prob[\inforstructure]{\auxbinary=1 \mid \signal_i}\in[0,1]~.
\end{align*}
Then by definition, we have
\begin{equation}
\label{eq:xi-affine}
    \report_i(\signal_i)
    =\expect{\state\mid \signal_i}=\state_1+\statediff\,\expect{\auxbinary\mid \signal_i}=\state_1+\statediff\,\posteriorprob_i(\signal_i).
\end{equation}
Similarly, with the joint posterior probability $\posteriorprob_{12}(\signal_1,\signal_2)\triangleq\prob[\inforstructure]{\auxbinary=1\mid \signal_1,\signal_2}$, 
we have
\begin{equation}
\label{eq:bayes-affine}
    \bayesaggregator(\report_1(\signal_1),\report_2(\signal_2))
    =\expect{\state\mid \signal_1,\signal_2}
    =\state_1+\statediff\,\posteriorprob_{12}(\signal_1,\signal_2)~.
\end{equation}

We now construct a new information structure $\inforstructure\primed$ with the state space $\statespace\primed \triangleq \{0, 1\}$ where its prior distribution $\prior\primed$ satisfies that 
$\prob[\state\sim\prior\primed]{\state = 0} = \prob[\state\sim \prior]{\state = \state_1}$, and 
\begin{align}
    \inforstructure\primed(0, \signal) \gets 
    \inforstructure(\state, \signal), \quad
    \text{if } \state = \state_1, \forall \signal\in\signalspace~; 
    \qquad
    \inforstructure\primed(1, \signal) \gets 
    \inforstructure(\state, \signal), \quad
    \text{if } \state = \state_2, \forall \signal\in\signalspace~.
\end{align}
Then given $(\statespace\primed, \inforstructure\primed)$, we next define the aggregator $\aggregator\primed:[0,1]^2\to\reals$ as follows:
\begin{equation}
\label{eq:f-tilde}
    \aggregator\primed(\posteriorprob_1,\posteriorprob_2)
    \triangleq\frac{\aggregator(\state_1+\statediff \posteriorprob_1,\ \state_1+\statediff \posteriorprob_2)-\state_1}{\statediff}~.
\end{equation}
Then combining \eqref{eq:xi-affine}--\eqref{eq:f-tilde},
\begin{align*}
    \aggregator(\report_1(\signal_1),\report_2(\signal_2))-\bayesaggregator(\report_1(\signal_1),\report_2(\signal_2))
    & = 
    \statediff\cdot \aggregator\primed(\posteriorprob_1(\signal_1), \posteriorprob_2(\signal_2)) + \state_1 - 
    \left(\state_1+\statediff\,\posteriorprob_{12}(\signal_1,\signal_2)\right)
    \\
    & =
    \statediff\left(\aggregator\primed(\posteriorprob_1(\signal_1),\posteriorprob_2(\signal_2))-\posteriorprob_{12}(\signal_1,\signal_2)\right)~,
\end{align*}
and thus we have
\begin{equation*}
\label{eq:loss-scale}
    \avgloss{\aggregator\mid \statespace, \inforstructure}
    =\statediff^2
    \cdot 
    \expect[\inforstructure\primed]{\left(\aggregator\primed(\posteriorprob_1(\signal_1),\posteriorprob_2(\signal_2))-\posteriorprob_{12}(\signal_1,\signal_2)\right)^2} 
    =
    \statediff^2\cdot \avgloss{\aggregator\primed\mid \{0, 1\}, \inforstructure\primed}~.
\end{equation*}
The conditional independence $\signal_1\perp \signal_2\mid \state$ is equivalent to $\signal_1\perp \signal_2\mid \auxbinary$ under the bijection $\state=\state_1+\statediff\auxbinary$, and the signal-space is unchanged. 
Let $\inforstructure\primed = \inforstructure\primed_1 \condiopera \inforstructure\primed_2$.
Thus, it suffices to prove the corresponding statement for the normalized $\{0,1\}$ state space setting:
\begin{equation}
\label{eq:goal-01}
    \sup\nolimits_{\inforstructure\primed\in\inforstructureClassCI} \avgloss{\aggregator\mid \{0, 1\}, \inforstructure\primed}
    \;\le\;
    \sup\nolimits_{\inforstructure\primed\in\inforstructureClassCI: |\supp(\inforstructure_i\primed)| \le 2, \forall i\in [2]} \avgloss{\aggregator\mid \{0, 1\}, \inforstructure\primed}~.
\end{equation}
To prove Eqn.~\eqref{eq:goal-01}, we follow the analysis used in \citet{ABS-18}.
Let $P^i$ be the marginal distribution of $\posteriorprob_i(\signal_i)$ on $[0,1]$:
\begin{align*}
    P^i(A)\triangleq\prob[\inforstructure_i\primed]{\posteriorprob_i\in A}, \quad 
    \forall \text{ Borel set } A\subseteq[0,1]~.
\end{align*}
Then Bayes plausibility implies that
\begin{equation*}
    \int_0^1 \posteriorprob \,P^i(\dd p)=\expect{\posteriorprob_i(\signal_i)}=\prob{\auxbinary=1}=\prior_2~.
\end{equation*}
Then we can reformulate the expected relative loss $\avgloss{\aggregator\primed\mid \{0, 1\}, \inforstructure\primed}$ as follows:
\begin{equation*}
    B_{\prior_2}(\aggregator\primed\mid P^1,P^2)
    \triangleq
    \avgloss{\aggregator\primed\mid \{0, 1\}, \inforstructure\primed}
    =
    \iint_{[0,1]^2}\Phi_{\prior_2}(\posteriorprob_1,\posteriorprob_2)\,P^1(\dd \posteriorprob_1)\,P^2(\dd \posteriorprob_2),
\end{equation*}
where
\begin{equation*}
    \Phi_{\prior_2}(\posteriorprob_1,\posteriorprob_2)
    \triangleq
    \left(\frac{\posteriorprob_1\posteriorprob_2}{\prior_2}+\frac{(1-\posteriorprob_1)(1-\posteriorprob_2)}{1-\prior_2}\right)\left(\aggregator\primed(\posteriorprob_1,\posteriorprob_2)-\bayesaggregatornew(\posteriorprob_1,\posteriorprob_2)\right)^2~.
\end{equation*}
As we can see, $(P^1,P^2)\mapsto B_{\prior_2}(\aggregator\primed\mid P^1,P^2)$ is a bilinear function, i.e., it is affine in each marginal $P^i$.
Now fix $\prior_2\in(0,1)$ and define the convex set of mean-$\prior_2$ probability measures:
\begin{align*}
    K(\prior_2)\triangleq\left\{P\ \text{probability measures on }[0,1] : \int_0^1 p\,P(\dd p)=\prior_2\right\}~.
\end{align*}
Let $K_2(\prior_2)\subseteq K(\prior_2)$ be the subset consisting of measures supported on at most two points.
By the fact that every extreme point of the set $K(\prior_2)$ must be supported on at most two points, we know 
\begin{align*}
    \sup\nolimits_{P^1,P^2\in K(\prior_2)} B_{\prior_2}(\aggregator\primed\mid P^1,P^2)
    =
    \sup\nolimits_{P^1,P^2\in K_2(\prior_2)} B_{\prior_2}(\aggregator\primed\mid P^1,P^2)~.
\end{align*}
We thus finish the proof of Eqn.~\eqref{eq:goal-01}.
\end{proof}

\begin{proof}[Proof of \Cref{thm:regret ub unknown}]
Fix any state space $\statespace = \{\state_1, \state_2\}$ with the prior distribution $\prior$.
With \Cref{lem:reduction_binary}, it is without loss to consider an information structure where each expert receives two signals, namely, $\signalspace_1 = \signalspace_2 = \{0, 1\}$.
Given any aggregation rule $\aggregator:[0,1]^2\to[0, 1]$, the expected loss under the above binary-signal conditionally independent structure $\inforstructure = \inforstructure_1 \condiopera\inforstructure_2$ is
\begin{align*}
    \avgloss{\aggregator\mid \statespace, \inforstructure}
    &=
    \sum\nolimits_{(\signal_1, \signal_2)\in \{0,1\}^2}
    \prob[\inforstructure]{\signal_1, \signal_2}\cdot \left(\aggregator\left(\report_1(\signal_1),\report_2(\signal_2)\right)-\bayesaggregator(\report_1(\signal_1),\report_2(\signal_2))\right)^2~.
\end{align*}
where
$
    \prob[\inforstructure]{\signal_1, \signal_2}
    =
    \prior_1 \cdot \inforstructure_1(\signal_1 \mid \state_1) \cdot 
    \inforstructure_2(\signal_2 \mid \state_1)
    + 
    \prior_2 \cdot \inforstructure_1(\signal_1 \mid \state_2) \cdot 
    \inforstructure_2(\signal_2 \mid \state_2)~.
$

Now define the variables $
a_1 \gets \inforstructure_1(\signal_1 \mid \state_1), 
b_1 \gets \inforstructure_2(\signal_1 \mid \state_1), 
a_2 \gets \inforstructure_1(\signal_1 \mid \state_2), 
b_2 \gets \inforstructure_2(\signal_1 \mid \state_2)$.
Then we can see that we can express $\avgloss{\aggregator\mid \statespace, \inforstructure}$ using only these $4$ variables. 
Thus, fixing any aggregation function $\aggregator_\expertoneweight$ for some $\expertoneweight\in[0, 1]$,
we can reformulate the program $
\Reg[\inforstructureClassCI]{\aggregator_{\expertoneweight}}$ of identifying its worst-case regret as the following optimization  that involves $7$ variables:  
$$
    \Reg[\inforstructureClassCI]{\aggregator_{\expertoneweight}}
    = 
    \sup\nolimits_{\statespace\in [0, 1]^2: \state_1 \le \state_2}
    \sup\nolimits_{\prior_2\in[0, 1]; a_1, b_1, a_2, b_2\in[0, 1]^4}
    \avgloss{\aggregator_{\expertoneweight}\mid \statespace, \inforstructure}~.
$$
After discretizing the space $[0, 1]$ for the parameter $\expertoneweight$, 
we solve the above seven-variable optimization problem at each grid point. We then select the value of $\expertoneweight$ that minimizes the resulting worst-case regret $\Reg[\inforstructureClassCI]{\aggregator_{\expertoneweight}}$. \cpedit{Numerical calculations reveal that the worst-case regret is minimized at $\expertoneweight \approx 0.585$. Under this optimal aggregator, the global maximum of the regret function is attained at the configuration where $\state_1 = 0.21097$, $\state_2 = 1.000$, and the prior mean $\priormean = 0.65866$. 
Specifically, the experts' signal structures $a_i, b_i$ correspond to the following reports:
expert 1 provides $\report_1^{\primed} = 0.21097$ and $\report_1^{\doubleprimed} = 0.87002$; 
expert 2 provides $\report_2^{\primed} = 0.21097$ and $\report_2^{\doubleprimed} = 0.87002$. 
The resulting minimum worst-case regret is equal to $0.025512$.}
\end{proof}

\begin{figure}[H]
    \centering
    \begin{tikzpicture}
    \begin{axis}[
        width=0.8\columnwidth,
        height=6.5cm,
        xlabel={Parameter $\expertoneweight$},
        ylabel={Worst-Case Regret},
        grid=major,
        grid style={dashed,gray!30},
        xmin=0, xmax=1.0,
        ymin=0.0, ymax=0.26,
        legend pos=north east,
        legend style={font=\footnotesize, fill=white, fill opacity=0.8, draw opacity=1, text opacity=1},
        tick label style={font=\footnotesize},
        label style={font=\small},
    ]

    \addplot[
        color=blue!70!black,
        thick,
        mark=none,
        smooth
    ]
    coordinates {
        (0.000, 0.25000) (0.050, 0.18757) (0.100, 0.14778)
        (0.150, 0.11723) (0.200, 0.09420) (0.250, 0.07536)
        (0.300, 0.06011) (0.350, 0.04771) (0.400, 0.04049)
        (0.450, 0.03438) (0.500, 0.02950) (0.530, 0.02715)
        (0.550, 0.02608) (0.570, 0.02560) (0.580, 0.02552)
        (0.585, 0.02551) 
        (0.590, 0.02552) (0.600, 0.02559) (0.630, 0.02607)
        (0.650, 0.02656) (0.700, 0.02816) (0.750, 0.03010)
        (0.800, 0.03225) (0.850, 0.03453) (0.900, 0.03690)
        (0.950, 0.03933) (1.000, 0.03958)
    };
    \addlegendentry{Our Aggregator $\aggregator_{\expertoneweight}$}

    \addplot[
        color=red!80!black,
        dashed,
        thick,
        domain=0:1,
        samples=2
    ]
    {0.023379}; 
    \addlegendentry{Lower Bound (0.023379)} 

    \node[coordinate] (opt) at (axis cs:0.585, 0.02551) {};
    \draw[black, ->, shorten >=2pt] (axis cs:0.72, 0.08) node[above, align=center, font=\scriptsize] {Min Regret\\$\approx 0.02551172$\\at $\expertoneweight \approx 0.585$} -- (opt);
    \end{axis}
\end{tikzpicture}
    \caption{The blue curve is the worst-case regret for the aggregation function $\aggregator_\expertoneweight$ for every $\expertoneweight\in\{0, 0.001, 0.002, \ldots, 1\}$,
    which achieves a global minimum of approximately $0.0255$ when $\expertoneweight \approx 0.585$. 
    The dashed red line represents the theoretical lower bound of $0.023379$.
    }
    \label{fig:gamma_sensitivity_unknown}
\end{figure}
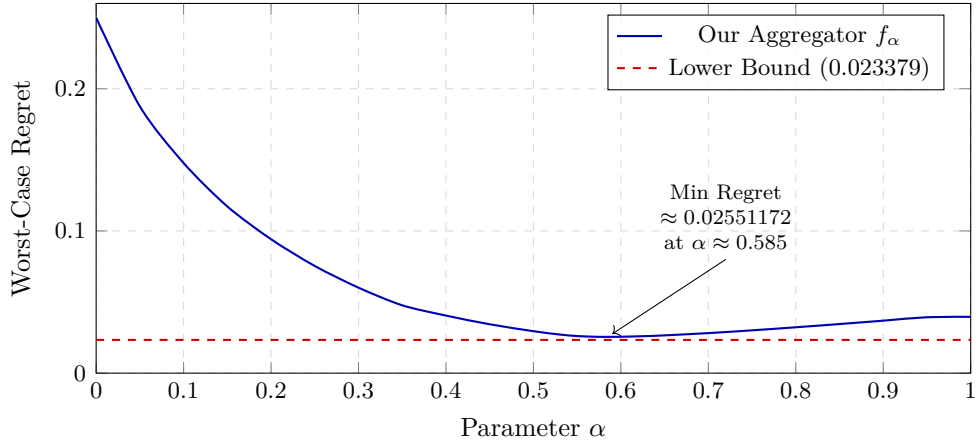

In \citet{ABS-18}, the authors have proposed three aggregation schemes when the state space is known and fixed to be $\statespace = \{0, 1\}$.
In this known-state benchmark,
the omniscient Bayesian aggregator, who knows the information structure and the prior mean $\priormean$, outputs the forecast
\begin{equation*}
    \label{eq:bayes known state}
    \begin{aligned}
    \bayesaggregator\left(\report_1, \report_2\right)
    &=
    \frac{\report_1 \report_2\left(1-\priormean \right)}{\report_1 \report_2\left(1-\priormean\right)+\left(1-\report_1\right)\left(1-\report_2\right)\priormean}\\
    &=
    \frac{1}{1+\left(\frac{1-\report_1}{\report_1}\right)\left(\frac{1-\report_2}{\report_2}\right)\cdot \frac{\priormean}{1-\priormean}}
    \triangleq g\left(\priormean, \report_1, \report_2\right)\,.
    \end{aligned}
\end{equation*}
The three methods proposed in \citet{ABS-18} are summarized as follows:
\begin{itemize}
    \item The Simple Averaging aggregation scheme (i.e., always output the forecast $\frac{\report_1+\report_2}{2}$).
    \item The Average Prior aggregation scheme that outputs a Bayesian forecast by assuming the underlying prior mean is $\frac{\report_1 + \report_2}{2}$, namely, it outputs $g\left(\frac{\report_1 + \report_2}{2}, \report_1, \report_2\right)$.
    \item The Heuristic Prior (SOTA) aggregation scheme that utilizes a heuristic function $ep(\report_1, \report_2)$ to estimate the prior mean, namely, it outputs $g\left(ep(\report_1, \report_2), \report_1, \report_2\right)$.\footnote{The heuristic function is defined as $ep(\report_1, \report_2) = 0.49(\report_1+\report_2) + 0.02 \cdot \mathbb{I}(\report_1+\report_2 > 1)$ in \citet{ABS-18}.}
\end{itemize}
\begin{table}[H]
    \centering
    \renewcommand{\arraystretch}{1.5}
    \begin{tabular}{lcc}
        \toprule
        \textbf{Aggregator} & \textbf{Prior Estimate} ($\hat{\mu}$) & \textbf{Worst-Case Regret} \\
        \midrule
        Simple Averaging & - & $0.0625$ \\
        Average Prior \citep{ABS-18} & $\dfrac{\report_1+\report_2}{2}$ & $0.0311$ \\
        Heuristic Prior \citep{ABS-18} & $ep(\report_1, \report_2)^\dagger$ & $0.0303$ \\
        \citet{KWW-24} ($\lambda=0.8$) & $\dfrac{\report_1+\report_2}{2}$ & $0.0298$ \\ 
        \midrule
        \textbf{Our Aggregator} ($\aggregator_{\expertoneweight}$) & $\mathbf{\expertoneweight \approx 0.585}$ & \textbf{0.025512} \\
        \bottomrule
    \end{tabular}
    \caption{The regret comparison in the unknown state space setting.}
    \label{tab:regret_comparison}
\end{table}
\cpedit{Additionally, we consider the parameterized aggregator family proposed by \citet{KWW-24}. For reported forecasts $(\report_1, \report_2)$, their aggregator is defined as:
\begin{equation}
    \label{eq:kww_aggregator}
    f_{ap}^{\hat{\lambda}}(\report_1, \report_2) = \frac{(1 - \hat{\priormean})^{2\hat{\lambda}-1} \report_1 \report_2}{(1 - \hat{\priormean})^{2\hat{\lambda}-1} \report_1 \report_2 + \hat{\priormean}^{2\hat{\lambda}-1} (1 - \report_1)(1 - \report_2)}~,
\end{equation}
where the prior proxy $\hat{\priormean}$ is set to the arithmetic mean $\frac{\report_1 + \report_2}{2}$. For this scheme, we performed a grid search with a step size of $0.1$ to optimize the parameter $\lambda \in [0, 1]$ to minimize the worst-case regret. In our setting, the optimal value was found at $\lambda=0.8$.

These schemes can also be applied in our unknown-state-space setting by treating the reported forecasts $(\report_1, \report_2)$ as inputs to the same formulas. However, as we demonstrate in Table~\ref{tab:regret_comparison}, even with the optimized $\lambda$, the methods from \citet{ABS-18} and \citet{KWW-24} incur worst-case regrets that are substantially larger than the regret achieved by our proposed aggregation scheme.}

\xhdr{Lower bound analysis}
Recall that our prior-agnostic robust forecasting aggregation problem strictly generalizes the canonical setting studied in prior work, which assumes a fixed and known binary state space $\{0, 1\}$. 
Consequently, any lower bound that holds in that more restrictive model immediately carries over to our setting. In particular, \citet{ABS-18} prove that, under the known $\{0, 1\}$-state-space assumption, every aggregation function incurs worst-case regret at least $\frac{5\sqrt{5}-11}{8} \approx 0.022542$ for this known state space $\{0, 1\}$ setting, directly applies as a valid lower bound in our problem. 
This value therefore remains a valid lower bound for our problem as well. We next show that allowing an unknown state space (i.e., prior-agnosticism) leads to a strictly stronger impossibility result, yielding a strictly larger lower bound on the minimax regret.
\begin{restatable}{theorem}{thmregretlbunknown}
\label{thm:regret lb unknown}
For any aggregation function $\aggregator$, it follows that
\begin{align*}
    \inf\nolimits_{\aggregator} \; 
    \Reg[\inforstructureClassCI]{\aggregator} \ge 
    \frac{31}{1326}
    \approx 0.023379~.
\end{align*}
That is, no aggregation function can guarantee a regret lower
than $0.023379$.
\end{restatable}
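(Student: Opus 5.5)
The lower bound will come from an explicit finite adversarial construction: a handful of CI instances with \emph{different} state spaces that produce the same report pair(s) but demand different Bayesian forecasts. Any single aggregator must commit to one value at each shared report pair. Averaging the instances' regrets under a suitable mixing distribution then lower bounds the worst case.

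By Lemma~\ref{lem:regret_equivalence}, the regret of an instance is
\[
\sum_{(\signal_1,\signal_2)} \prob[\inforstructure]{\signal_1,\signal_2}\,\bigl(\aggregator(\report(\signal))-\bayesaggregator(\report(\signal))\bigr)^2 .
\]
This is a weighted sum of squared deviations at finitely many report pairs.

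The plan is to mirror the structure of the \citet{ABS-18} $\frac{5\sqrt5-11}{8}$ argument, while exploiting the unknown endpoints.

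\textbf{Instance A.} Use the known state space $\{0,1\}$ with binary signals. At a symmetric report pair $(\report,\report)$, Bayes outputs $g(\priormean,\report,\report)$.

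\textbf{Instance B.} Use a state space $\{\state_1,\state_2\}$ with, for example, $\state_2=1$ and $\state_1>0$. The atypical signal produces a report equal to $\state_1$, which reveals the low state, so the pair $(\state_1,\state_1)$ has Bayes forecast exactly $\state_1$. The structure is chosen so that the same numerical reports also occur in instance A, where Bayes prescribes a different value there. The numerical worst case for $\aggregator_{0.585}$ reported above has reports $0.21097 = \state_1$ and $0.87002$, which suggests that the binding instances indeed place a report at the lower endpoint.

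Concretely, I will use binary-signal CI structures, which are without loss by Lemma~\ref{lem:reduction_binary}, and choose rational parameters so the bound comes out as $31/1326$; note $1326 = 2\cdot 3\cdot 13\cdot 17$. The structure of the argument has three steps.

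\begin{enumerate}
\item Fix a small set of report points, say $(u,u)$, $(u,v)$, $(v,u)$, $(v,v)$, shared across two or three instances $I_1,\dots,I_k$. For each instance record the probability weights $w_j(\report)$ and the Bayesian targets $t_j(\report)$.
\item For any $\aggregator$, bound
\[
\max_j \avgloss{\aggregator\mid I_j} \ \ge\ \sum_j \rho_j\, \avgloss{\aggregator\mid I_j}
\]
for mixing weights $\rho$. The right-hand side decomposes pointwise into $\sum_\report \sum_j \rho_j w_j(\report)\bigl(\aggregator(\report)-t_j(\report)\bigr)^2$.
\item Minimize each pointwise quadratic in $\aggregator(\report)$; the minimizer is the weighted mean of the $t_j$. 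The resulting value is a weighted variance of the targets, which gives an explicit rational number.
\end{enumerate}

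The parameters ($\state_1$, the priors, the signal likelihoods, and $\rho$) are then tuned to maximize this variance expression. Ideally they are rational, with $\rho$ chosen to equalize the two instances' regrets at the optimizing $\aggregator$, yielding exactly $31/1326$.

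\textbf{Main obstacle.} The hard part is the design step: finding instances with different state spaces whose report supports genuinely overlap while their Bayesian targets differ substantially. Bayes plausibility ties each expert's report distribution to the prior mean, and the reports must lie in $[\state_1,\state_2]$, so these constraints conflict. My first attempt will take instance B to be a shifted, rescaled copy of the ABS hard instance via the affine map in the proof of Lemma~\ref{lem:reduction_binary}. Since regret scales as $\statediff^2$, I will pick $\statediff$ and the shift so that one of B's reports coincides with a report of the $\{0,1\}$ instance. I will then check numerically that the combined variance bound exceeds $0.022542$ before solving for the exact rational optimum.
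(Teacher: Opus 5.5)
You have the right reduction, and it is the paper's: lower bound the worst case by a fixed mixture of CI instances, replace $f$ at each report profile by the weighted mean of the Bayesian targets, and evaluate the leftover weighted variance. The gap is that the proposal stops where the proof actually starts. No instances are fixed and no profile probabilities or Bayesian targets are computed. The value $31/1326$ appears only as the number you hope the tuning will produce, so nothing shows that any construction beats even $0.022542$.

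The template you sketch also lacks the mechanism that drives the paper's bound. Suppose instance A lives on $\{0,1\}$ and instance B on $\{\theta_1,1\}$ with $\theta_1>0$. Under conditional independence, a profile's A-target is at an endpoint only if the profile contains a report $0$ or $1$. A report of $0$ is infeasible in B, and a report of $1$ also pins down $\theta=1$ in B. So no shared profile has the two instances certifying opposite states, and that is exactly where the paper gets most of its bound. You would have to show that some asymmetric design still clears $0.023379$, and the proposal gives no indication how.

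\emph{The missing idea} is to shift the two state spaces in opposite directions, so that each structure has a revealing report at the opposite end of a common pair of reports.
\begin{itemize}
\item The paper takes $\pi^A$ on $\{0,5/6\}$ with a uniform prior, $\Pr(s_L\mid 0)=1$ and $\Pr(s_L\mid 5/6)=1/4$.
\item It takes $\pi^B$ to be the mirror image on $\{1/6,1\}$, with $\Pr(s_L\mid 1/6)=3/4$ and $\Pr(s_H\mid 1)=1$.
\item Both induce the same reports, $x_L=1/6$ and $x_H=5/6$.
\end{itemize}
Under $\pi^A\circ\pi^A$, any profile containing $x_H$ has target $5/6$. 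Under $\pi^B\circ\pi^B$, any profile containing $x_L$ has target $1/6$. On the two mixed profiles, which have probability $3/32$ each under both structures, the targets therefore differ by $2/3$. These two profiles alone contribute $1/48$.

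For the remaining profiles, $(x_L,x_L)$ has probabilities $17/32$ versus $9/32$ and targets $5/102$ versus $1/6$, and $(x_H,x_H)$ is its mirror. With uniform mixing, the pointwise optimal forecasts are $7/78$, $1/2$ and $71/78$. The residual is $2\cdot\frac{9}{7072}+2\cdot\frac{1}{96}=\frac{31}{1326}$.

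Your remark about the numerical worst case of $f_{0.585}$ (bottom report equal to $\theta_1$, top state $1$) was on target: that is the shape of $\pi^B$. It needs to be paired with its mirror $\pi^A$, not with a $\{0,1\}$ instance.
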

\begin{table}[H]
    \centering
    \renewcommand{\arraystretch}{1.5} 
    \begin{tabular}{c c c c c}
        \toprule
        \multirow{2}{*}{\textbf{Structure}} & \multirow{2}{*}{\textbf{State} ($\state$)} & \multirow{2}{*}{\textbf{Prior} ($\prior$)} & \multicolumn{2}{c}{\textbf{Signal Probabilities} $\prob{\signal \mid \state}$} \\
        \cmidrule(lr){4-5}
         & & & $\signal = \signal_L$ & $\signal = \signal_H$ \\
        \midrule
        \multirow{2}{*}{$\inforstructure^A$} & $0$   & $\frac{1}{2}$ & $1$   & $0$ \\
                                             & $\frac{5}{6}$ & $\frac{1}{2}$ & $\frac{1}{4}$ & $\frac{3}{4}$ \\
        \midrule
        \multirow{2}{*}{$\inforstructure^B$} & $\frac{1}{6}$ & $\frac{1}{2}$ & $\frac{3}{4}$ & $\frac{1}{4}$ \\
                                             & $1$   & $\frac{1}{2}$ & $0$   & $1$ \\
        \bottomrule
    \end{tabular}
    \caption{The two symmetric adversarial information structures. $\inforstructure^A$ yields noisy low signals and revealing high signals; $\inforstructure^B$ yields revealing low signals and noisy high signals.}
    \label{tab:adversarial_structures}
\end{table}
To establish the regret lower bound, 
we construct an adversarial distribution over two conditionally independent information structures such that even an aggregator who knows the adversary's mixing rule cannot guarantee regret below $0.023379$.

Specifically, the adversary draws an expert's information structure uniformly at random, i.e., choosing $\inforstructure^A$ or $\inforstructure^B$ with probability $\sfrac{1}{2}$ each.
In both structures, the signal space is binary, $\signalspace = \{\signal_L, \signal_H\}$, and the induced posterior means are $\report_L = \sfrac{1}{6}$ and $\report_H = \sfrac{5}{6}$, respectively (see the detailed construction in Table \ref{tab:adversarial_structures}).
These two information structure differs in their state space and the way that how they generate signals. 
Under $\inforstructure^A$, the state space is $\statespace^A = \{0, \sfrac{5}{6}\}$ with the uniform prior, whereas under $\inforstructure^B$, the state space is $\statespace^B = \{\sfrac{1}{6}, 1\}$ with the uniform prior. 
After the information structure $\inforstructure^\instance$ for some $\instance\in\{A, B\}$ is realized, both experts receive independent private signals generated according to that realized structure $\inforstructure^\instance$, namely, the joint information structure is $\inforstructure = \inforstructure^\instance \condiopera \inforstructure^\instance$. 

\begin{proof}[Proof of \Cref{thm:regret lb unknown}]
Fix any aggregation rule $\aggregator$. 
We construct an adversary's mixed strategy over two conditionally independent information structures $\inforstructure^A, \inforstructure^B$. The adversary draws $\inforstructure^A$ and $\inforstructure^B$ uniformly. 
By Yao's principle, we have that
\begin{align*}
    \Reg[\inforstructureClassCI]{\aggregator}
    =\sup\nolimits_{\statespace, \inforstructure \in\inforstructureClassCI} \avgloss{\aggregator\mid \statespace, \inforstructure}
    \ \ge\ 
    \expect[(\statespace^\instance, \inforstructure^\instance)]{\avgloss{\aggregator\mid \statespace^\instance, \inforstructure^\instance\condiopera \inforstructure^\instance}}~.
\end{align*}
Therefore, we have 
\begin{align*}
    \inf\nolimits_{\aggregator}\Reg[\inforstructureClassCI]{\aggregator}\ \ge\ \inf\nolimits_{\aggregator}\ \expect[(\statespace^\instance, \inforstructure^\instance)]{\avgloss{\aggregator\mid \statespace^\instance, \inforstructure^\instance\condiopera \inforstructure^\instance}}~.
\end{align*}
It remains to lower bound the right-hand side.

In both structures, the experts receive binary signals $\signal \in \{\signal_L, \signal_H\}$. 
Let $\report \in\{\report_L, \report_H\}^2$ be the realized report profile, where $\report_L, \report_H$, by construction, satisfy that
\begin{align*}
    \report_L 
    \triangleq  \sum\nolimits_{\state\in\statespace^\instance} \state\cdot \prob[\inforstructure^\instance]{\state \mid \signal_L} = \frac{1}{6}~;\quad 
    \report_H 
    \triangleq  \sum\nolimits_{\state\in\statespace^\instance} \state\cdot \prob[\inforstructure^\instance]{\state \mid \signal_H} = \frac{5}{6}~.
\end{align*}
Given the expert's information structure $\inforstructure^\instance$ for $\instance\in\{A, B\}$, we use $\bayesaggregator_{\inforstructure^\instance \condiopera \inforstructure^\instance }(\report)$ to denote the omniscient aggregator's forecast given the received experts' reports $\report \in \{\report_L, \report_H\}^2$ under the joint information structure $\inforstructure^\instance \condiopera \inforstructure^\instance$. 
By construction, we have that
\begin{align*}
    \bayesaggregator_{\inforstructure^A \condiopera \inforstructure^A}(\report) & =
    \begin{cases}
     \frac{5}{102} & \text{if } \report = (\report_L, \report_L) \\
     \frac{5}{6} & \text{otherwise~;}
    \end{cases}
    \quad
    \bayesaggregator_{\inforstructure^B \condiopera \inforstructure^B}(\report) =
    \begin{cases}
    \frac{97}{102} & \text{if } \report = (\report_H, \report_H) \\
    \frac{1}{6} & \text{otherwise~.}
    \end{cases}
\end{align*}
We compute the marginal probability for the experts' reports $\report \in \{\report_L, \report_H\}^2$ as follows:
\begin{align*}
    \prob[\inforstructure^A \condiopera \inforstructure^A]{\report_L, \report_L} &= \frac{17}{32}~, \quad
    \prob[\inforstructure^A \condiopera \inforstructure^A]{\report_H, \report_H} = \frac{9}{32}~, \quad
    \prob[\inforstructure^A \condiopera \inforstructure^A]{\report_L, \report_H} =
    \prob[\inforstructure^A \condiopera \inforstructure^A]{\report_H, \report_L} =
    \frac{3}{32}~; \\[0.5em]
    \prob[\inforstructure^B \condiopera \inforstructure^B]{\report_L, \report_L} &= \frac{9}{32}~, \quad
    \prob[\inforstructure^B \condiopera \inforstructure^B]{\report_H, \report_H} = \frac{17}{32}~, \quad
    \prob[\inforstructure^B \condiopera \inforstructure^B]{\report_L, \report_H} =
    \prob[\inforstructure^B \condiopera \inforstructure^B]{\report_H, \report_L} = \frac{3}{32}~.
\end{align*}
We next analyze the aggregator's optimal forecast given that he knows the mixed strategy of the adversary. 
For a report profile $\report$, the optimal forecast $\aggregator^*(\report)$ minimizes the expected squared loss:
\begin{align*}
    \aggregator^*(\report) = \frac{\prob[\inforstructure^A \condiopera \inforstructure^A]{\report} \bayesaggregator_{\inforstructure^A \condiopera \inforstructure^A}(\report) + \prob[\inforstructure^B \condiopera \inforstructure^B]{\report} \bayesaggregator_{\inforstructure^B \condiopera \inforstructure^B}(\report)}{\prob[\inforstructure^A \condiopera \inforstructure^A]{\report} + \prob[\inforstructure^B \condiopera \inforstructure^B]{\report}}~.
\end{align*}
Calculating this for each profile yields:
\begin{align*}
    \aggregator^*(\report_L, \report_L) &= \frac{7}{78}~, \quad
    \aggregator^*(\report_H, \report_H) = \frac{71}{78}~, \quad
    \aggregator^*(\report_L, \report_H) 
    = \aggregator^*(\report_H, \report_L) = \frac{1}{2}~.
\end{align*}
Thus, the aggregator's regret is calculated as:
\begin{align*}
   \expect[(\statespace^\instance, \inforstructure^\instance)]{\avgloss{\aggregator\mid \statespace^\instance, \inforstructure^\instance\condiopera \inforstructure^\instance}}
    & = \sum\nolimits_{\instance\in\{A, B\}} 
    \prob{\inforstructure^\instance}
    \sum\nolimits_{\report\in\{\report_L, \report_H\}^2}
    \left(\aggregator^*(\report) - \bayesaggregator_{\inforstructure^\instance \condiopera \inforstructure^\instance }(\report)\right)^2 \cdot \prob[\inforstructure^\instance\condiopera \inforstructure^\instance]{\report}\\
    & = \frac{31}{1326}~.
\end{align*}
This yields the lower bound $\inf_{\aggregator}\Reg{\aggregator} \ge \frac{31}{1326}$. 
\end{proof}

\subsection{The Superior Performance of Aggregator \texorpdfstring{$\aggregator_{\expertoneweight}$}{} in Known-State Setting}

We also evaluate our proposed aggregation function $\aggregator_\expertoneweight$ in the setting where the state space is known and fixed to be $\statespace = \{0,1\}$. 
In this specific setting, we show that there exists an $\expertoneweight\in[0, 1]$ such that the aggregation function achieves a regret upper bound that almost matches the theoretical lower bound $\frac{5\sqrt{5}-11}{8}\approx 0.0225425$ that has been established in \citet{ABS-18}. 

\begin{proposition}
\label{prop:known_binary_upper_bound}
In the state space setting with $\statespace = \{0, 1\}$, 
the aggregation function with $\aggregator_{\expertoneweight}$ $\expertoneweight = 0.5168$ guarantees a worst-case regret at most $0.022599$;
\begin{align*}
    \Reg[\inforstructureClassCI]{\aggregator_{0.5168}}
    \approx 0.022599 < \cpedit{0.0226}~.
\end{align*}
\end{proposition}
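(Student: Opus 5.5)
The plan mirrors the proof of \Cref{thm:regret ub unknown}, specialized to the fixed state space $\statespace=\{0,1\}$. First, apply the second half of the proof of \Cref{lem:reduction_binary}. In the $\{0,1\}$ setting the expected loss is $B_{\prior_2}(\aggregator_{\expertoneweight}\mid P^1,P^2)$, which is bilinear in the two posterior marginals $P^1,P^2\in K(\prior_2)$. Its supremum is therefore attained on measures with at most two support points. So it suffices to consider CI structures in which each expert has binary signals $\signalspace_i=\{\signal_L,\signal_H\}$.

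Next, parametrize the problem. Use the prior $\prior_2=\priormean$ and the conditional probabilities $a_1,a_2,b_1,b_2$ from the proof of \Cref{thm:regret ub unknown}; these now number five, since $\state_1=0$ and $\state_2=1$ are fixed. Equivalently, for each expert $i$ use the two reports $\report_i^{L}\le\priormean\le\report_i^{H}$, whose weights are forced by Bayes plausibility. In this parametrization, each of the four report profiles has probability
\[
\prob{\report_1,\report_2}=\prob{\report_1}\prob{\report_2}\left(\frac{\report_1\report_2}{\priormean}+\frac{(1-\report_1)(1-\report_2)}{1-\priormean}\right),
\]
and the benchmark is $g(\priormean,\report_1,\report_2)$. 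The regret $\Reg[\inforstructureClassCI]{\aggregator_{0.5168}}$ is then the maximum over a compact five-dimensional box of an explicit smooth function: the weighted sum of $(\aggregator_{0.5168}-g)^2$ over the four profiles. Degenerate boundary cases (reports equal to $0$ or $1$, or $\priormean\in\{0,1\}$) are handled by continuity, using the convention that $\aggregator_{\expertoneweight}$ sends $0$ or $1$ inputs to $0$ or $1$. When the two reports are conflicting ($0$ and $1$), the profile has probability zero under a consistent structure.

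Finally, evaluate this maximum for $\expertoneweight=0.5168$. This is the main obstacle, because the bound $0.022599$ is only about $5.7\times10^{-5}$ above the lower bound $\frac{5\sqrt5-11}{8}$, so the certification has to be precise. I would first run a global multistart search to locate candidate maximizers. By symmetry, I expect the maximizer to have both experts sharing the same two reports, as in the unknown-state case. I would then refine a grid with step $10^{-5}$ around each candidate. To make the claim rigorous rather than numerical, I would bound the gradient of the objective on the box, using that $\aggregator_{\expertoneweight}$ and $g$ are Lipschitz away from the corners. Near the corners I would add a separate argument: regions where some report is within $\epsilon$ of $0$ or $1$ contribute regret well below $0.0226$. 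That lets a grid-plus-Lipschitz (or interval-arithmetic) certificate cover the whole box and confirm that the maximum is at most $0.022599<0.0226$.
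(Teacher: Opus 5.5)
Your core route is the paper's own. The paper reduces to two signals per expert via the bilinearity/extreme-point argument of \Cref{lem:reduction_binary}, parametrizes the $\{0,1\}$ CI instance by $\priormean$ and four conditional probabilities, and locates the maximum for $\expertoneweight=0.5168$ by a global heuristic search followed by a $10^{-5}$ grid refinement. It offers nothing beyond this numerical evidence: it reports the maximizer $\priormean=0.81293$, with expert~1 reporting $0.20043$ or $0.99783$ and expert~2 always reporting $0.81293$.

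Two of your additions are wrong, however.

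\begin{enumerate}
\item \emph{The symmetry guess.} In the known-state setting the reported worst case is asymmetric: expert~2 is completely uninformative. The benchmark is then simply $\report_1$, and almost all of the regret comes from the low report, where $\aggregator_{0.5168}(0.20043,0.81293)\approx 0.511$. Your unrestricted multistart search would still find this instance, but seeding or restricting to identical experts would miss it.

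\item \emph{The boundary step of your rigorous certificate.} The maximizer has a report within $0.0022$ of $1$. Sending that report to $1$, with the weights readjusted by Bayes plausibility, still gives regret about $0.02257$. So any layer near the faces $\report\in\{0,1\}$ contains instances essentially at the target value, and you cannot excise it by arguing that its regret is well below $0.0226$. The non-Lipschitz problem is also not confined to corners. Near every face, $\partial\aggregator_{\expertoneweight}/\partial\report_1\sim(1-\report_1)^{\expertoneweight-1}$ blows up.
\end{enumerate}

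What can rescue a grid-plus-Lipschitz certificate is a different observation. Near $\report_1=1$ the squared error behaves like $(1-\report_1)^{2\expertoneweight}$, and $2\expertoneweight>1$, so the objective's derivative stays bounded up to the face. The conflicting corners need separate care, though there the profile weight vanishes. You would need to prove such a bound and use it, or reparametrize near the boundary, rather than discard the boundary region.
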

As we can see, the above upper bound ($0.022599$), together with the theoretical lower bound derived from \citet{ABS-18},
demonstrate that the optimality gap of our proposed aggregation function is almost negligible:
\begin{align*}
    \inf\nolimits_{\expertoneweight\in [0, 1]}\; 
    \Reg[\inforstructureClassCI]{\aggregator_{\expertoneweight}}
    - 
    \frac{5\sqrt{5}-11}{8}
    \approx 6 \times 10^{-5}~,
\end{align*}
which implies that in the known binary-state setting, our proposed strategy is near-optimal, extracting essentially all achievable information from the experts' forecasts without requiring knowledge of the underlying signal structure.

To corroborate the tightness of our theoretical bounds, we also conduct a numerical sensitivity analysis of the aggregation strategy with respect to parameter $\expertoneweight$.
\Cref{fig:gamma_sensitivity} plots the worst-case regret when the parameter $\expertoneweight$ varies.
The curve exhibits a global minimum in the neighborhood of $\expertoneweight \approx 0.5168$. \cpedit{Numerical calculations reveal that the worst-case instance for this known $\{0, 1\}$ state space setting is with the prior mean $\priormean = 0.81293$. 
Specifically, the experts' signal structures correspond to the following reports:
expert 1 provides $\report_1^{\primed} = 0.20043$ and $\report_1^{\doubleprimed} = 0.99783$; 
expert 2 provides $\report_2^{\primed} = 0.81293$ and $\report_2^{\doubleprimed} = 0.81293$, which gives a maximum regret $0.022599$.}

\begin{figure}[t]
    \centering
    \begin{tikzpicture}
        \begin{axis}[
            width=0.8\columnwidth,
            height=6.5cm,
            xlabel={Parameter $\expertoneweight$},
            ylabel={Worst-Case Regret},
            grid=major,
            grid style={dashed,gray!30},
            xmin=0, xmax=1.0,
            ymin=0.0, ymax=0.26,
            legend pos=north east,
            legend style={font=\footnotesize, fill=white, fill opacity=0.8, draw opacity=1, text opacity=1},
            tick label style={font=\footnotesize},
            label style={font=\small},
        ]

        \addplot[
            color=blue!70!black,
            thick,
            mark=none,
            smooth
        ]
        coordinates {
            (0.000, 0.25000) (0.050, 0.18757) (0.100, 0.14778)
            (0.150, 0.11775) (0.200, 0.09420) (0.250, 0.07535)
            (0.300, 0.06011) (0.350, 0.04770) (0.400, 0.03757)
            (0.450, 0.02929) (0.480, 0.02508) (0.500, 0.02289)
            (0.510, 0.02263) (0.516, 0.02259) (0.517, 0.02259)
            (0.518, 0.02259) (0.520, 0.02260) (0.530, 0.02266)
            (0.550, 0.02286) (0.600, 0.02370) (0.650, 0.02490)
            (0.700, 0.02638) (0.750, 0.02806) (0.800, 0.02989)
            (0.850, 0.03184) (0.900, 0.03389) (0.950, 0.03600)
            (1.000, 0.03817)
        };
        \addlegendentry{Our Aggregator $\aggregator_{\expertoneweight}$}

        \addplot[
            color=red!80!black,
            dashed,
            thick,
            domain=0:1,
            samples=2
        ]
        {0.02254};
        \addlegendentry{Lower Bound (0.02254)}

        \node[coordinate] (opt) at (axis cs:0.517, 0.02259) {};
        \draw[black, ->, shorten >=2pt] (axis cs:0.65, 0.08) node[above, align=center, font=\scriptsize] {Min Regret\\$\approx 0.0225986$\\at $\expertoneweight \approx 0.517$} -- (opt);

        \end{axis}
    \end{tikzpicture}
    \caption{Sensitivity analysis of the worst-case regret with respect to the parameter $\expertoneweight$. The blue curve depicts the performance of $\aggregator_{\expertoneweight}$. The regret decreases sharply as $\expertoneweight$ approaches $0.517$, achieving a global minimum that converges to the theoretical lower bound (dashed red line).}
    \label{fig:gamma_sensitivity}
\end{figure}
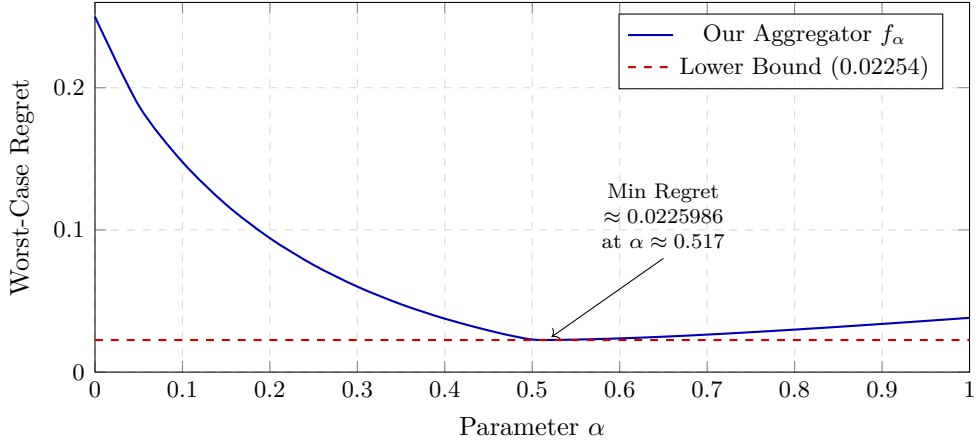

\xhdr{Compare to other aggregation schemes}
As mentioned, \citet{ABS-18} propose three aggregation schemes for the known state space $\statespace = \{0, 1\}$ setting.
However, it is known that none of these aggregation schemes achieve a regret strictly lower than $0.0250$. \cpedit{We also consider the  aggregator family from \citet{KWW-24}. In this known-state benchmark, we optimized the parameter $\lambda \in [0, 1]$ and found that the minimum worst-case regret is achieved at $\lambda = 1$. At this value, the formula in \eqref{eq:kww_aggregator} simplifies to the Bayesian aggregator with a prior $\hat{\mu}$, effectively degenerating to the Average Prior method of \citet{ABS-18}.}

In recent work, \citet{GHHKSY-25} have developed an algorithmic framework for this known-state setting that computes a near-optimal aggregator via efficient approximation procedures (e.g., based on discretized reports or Lipschitz-type regularity restrictions).
They show that their framework produces an aggregator whose regret nearly matches the known lower bound.
However, their aggregator is obtained through a computational pipeline involving discretization, numerical optimization, and interpolation, which leads to an implicit mapping rather than an explicit closed-form aggregation rule.
In contrast, we propose a simple, one-parameter aggregator $\aggregator_\expertoneweight$ (\Cref{defn:prior-agnostic aggregation scheme}) that is directly implementable and easy to use: one simply plugs in the experts' report $(\report_1, \report_2)$ and set $\wtedit{\expertoneweight \gets 0.517}$ to achieve performance that is comparably close to the theoretical lower bound (within $6\times 10^{-5}$), without the overhead of running a separate approximation algorithm or constructing the aggregation rule numerically.

We summarize these comparisons in Table~\ref{tab:regret_comparison}.


\begin{table}[H]
    \centering
    \renewcommand{\arraystretch}{1.5}
    \begin{tabular}{lcc}
        \toprule
        \textbf{Aggregator} & \textbf{Prior Estimate} ($\hat{\mu}$) & \textbf{Worst-Case Regret} \\
        \midrule
        Simple Averaging & - & $0.0625$ \\
        Average Prior \citep{ABS-18} & $\frac{\report_1+\report_2}{2}$ & $0.0260$ \\
        State-of-the-art \citep{ABS-18} & $ep(\report_1, \report_2)^\dagger$ & $0.0250$ \\
        \citet{KWW-24} ($\lambda=1$) & $\frac{\report_1+\report_2}{2}$ & $0.0260$ \\ 
        \citet{GHHKSY-25} & - & $0.0226$ \\
        \midrule
        \textbf{Our Aggregator} ($\aggregator_{\expertoneweight}$) & $\mathbf{\expertoneweight \approx 0.5168}$ & \textbf{0.022599} \\
        \bottomrule
        \multicolumn{3}{p{0.9\linewidth}}{\footnotesize $^{\dagger}$ The heuristic function is $ep(\report_1, \report_2) = 0.49(\report_1+\report_2) + 0.02 \cdot \mathbb{I}(\report_1+\report_2 > 1)$.} \\
    \end{tabular}
    \caption{The regret comparison in the known state space setting with $\statespace = \{0, 1\}$.}
    \label{tab:regret_comparison_known}
\end{table}

\section{Blackwell-Ordered and General Information Structures}
\label{sec:blacwell-ordered and general}

In this section, we analyze the other two classes of information structures:
the Blackwell-ordered setting where the experts' signals can be ranked according to the Blackwell order;
and the general setting where no assumption is imposed on the information structures. 

\subsection{Blackwell-Ordered Information Structures}
In the Blackwell-ordered setting, 
we show that in this setting, the lack of knowledge regarding the state values $\statespace$ and the prior $\prior$ does not penalize the aggregator. 
By leveraging the results from \citet{ABS-18}, we demonstrate that, even without knowledge of the state values, the aggregator achieves the same minimax regret bound as established in the known state space setting.

Let $\inforstructureClassBO$ be the class of Blackwell-ordered information structures. 
Formally, for any $\inforstructure \in \inforstructureClassBO$, we assume without loss of generality that Expert 2 is \emph{Blackwell-more-informative} than Expert 1. 
This implies that Expert 1's signal $\signal_1$ can be viewed as a noisy garbling of Expert 2's signal $\signal_2$ (or equivalently, $\signal_2$ is a sufficient statistic for the pair $(\signal_1, \signal_2)$ with respect to the state $\state$).
\begin{restatable}{corollary}{corblackwell}
\label{pro:blackwell_lowerupperbound_unknown}
For the Blackwell-ordered information structures, 
we have that $\inf_{\aggregator} \Reg[\inforstructureClassBO]{\aggregator} \ge \frac{5\sqrt{5}-11}{8} \approx 0.0225$. 
There exists an aggregation function that achieves 
$\Reg[\inforstructureClassBO]{\aggregator} = \frac{5\sqrt{5}-11}{8}$.
\end{restatable}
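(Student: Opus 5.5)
The proof has two halves. The lower bound comes from an inclusion argument. The upper bound comes from reducing any binary state space to the normalized $\{0,1\}$ space via the affine map in the proof of \Cref{lem:reduction_binary}, and then checking that the aggregator of \citet{ABS-18} commutes with that map.

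\textbf{Lower bound.} The known-state Blackwell-ordered problem of \citet{ABS-18} is the special case $\statespace=\{0,1\}$ of our problem. Hence $\Reg[\inforstructureClassBO]{\aggregator}$ in our setting is at least the worst-case regret of $\aggregator$ restricted to $\statespace=\{0,1\}$. \citet{ABS-18} show that this restricted quantity is at least $\frac{5\sqrt5-11}{8}$ for every $\aggregator$. Taking the infimum over $\aggregator$ gives the first claim.

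\textbf{Upper bound, reduction.} I would use the precision-weighted rule $\aggregator^{\mathrm{pw}}$ of \citet{ABS-18}. It outputs a convex combination $w\,\report_1+(1-w)\,\report_2$, where $w=w(\report_1,\report_2)$ depends on the reports through their deviations, or ``precisions'', in a way that is invariant under the affine rescaling $\report\mapsto\state_1+\statediff\report$. Fix $\statespace=\{\state_1,\state_2\}$ with $\statediff>0$ and a Blackwell-ordered $\inforstructure$, and repeat the construction in the proof of \Cref{lem:reduction_binary}. Relabeling the states to $\{0,1\}$ via $\state=\state_1+\statediff\auxbinary$ leaves signals unchanged, so Blackwell ordering is preserved: $\signal_2$ is still sufficient for $(\signal_1,\signal_2)$. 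Under this relabeling $\report_i=\state_1+\statediff\posteriorprob_i$ and $\bayesaggregator=\state_1+\statediff\posteriorprob_{12}=\state_1+\statediff\posteriorprob_2$. If the aggregator is affine-equivariant, i.e. $\aggregator\primed=\aggregator$ in \eqref{eq:f-tilde}, the computation in that proof gives
\[
\avgloss{\aggregator\mid\statespace,\inforstructure}=\statediff^2\,\avgloss{\aggregator\mid\{0,1\},\inforstructure\primed}\le \avgloss{\aggregator\mid\{0,1\},\inforstructure\primed}\le\frac{5\sqrt5-11}{8},
\]
using $\statediff\le1$ and the known-state guarantee of \citet{ABS-18}. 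Taking the supremum yields equality with the lower bound.

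\textbf{Main obstacle.} The crux is verifying affine equivariance of the specific rule, since the precision weights in \citet{ABS-18} are defined relative to the prior mean or the endpoints $0,1$. If the rule uses only differences of reports, or ratios of such differences, equivariance is immediate. If instead it references $1/2$ or the endpoints $\{0,1\}$, equivariance fails, and I would take a different route. In that case I would show directly that the binary-state worst case dominates. Write the regret as $\statediff^2\,\avgloss{\aggregator\primed\mid\{0,1\},\inforstructure\primed}$, where $\aggregator\primed$ is the rescaled rule. Then argue that the \citet{ABS-18} upper-bound analysis applies to $\aggregator\primed$, because that analysis uses only the property that the output lies between the two reports with appropriate weights, and this property survives affine rescaling. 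Being between the two reports is itself affine-invariant, which is why I expect this fallback to go through.
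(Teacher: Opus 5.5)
Your lower bound is the paper's inclusion argument and is fine. The gap is in the upper bound.

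Your main route needs affine equivariance, $f(\theta_1+\Delta p_1,\theta_1+\Delta p_2)=\theta_1+\Delta f(p_1,p_2)$, and the precision-weighted rule does not have it. Its weights use $\phi(x)=1/(x(1-x))$, which is anchored at the endpoints $0$ and $1$. Its switching threshold $|x_1-x_2|>0.4$ becomes $|p_1-p_2|>0.4/\Delta$ after rescaling. So your claim that the weights are invariant under $x\mapsto\theta_1+\Delta x$ is false, and everything rests on the fallback.

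The fallback does not go through. The guarantee of \citet{ABS-18} cannot come from ``the output lies between the two reports.'' Simple averaging also lies between them, yet on the Blackwell-ordered instance $x_1=1/2$, $x_2\in\{0,1\}$ its regret is $\mathbb{E}[(x_1-x_2)^2]/4=1/16$. The ``appropriate weights'' are exactly what rescaling destroys. So writing the regret as $\Delta^2 L(f'\mid\{0,1\},\pi')$ leaves you needing a bound for a different aggregator $f'$ that \citet{ABS-18} never analyzed.

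The missing idea is to keep the reports fixed and change the state space, rather than rescale the reports.
\begin{enumerate}
\item Under Blackwell ordering the benchmark is the better-informed report $x_2$. So the regret of any report-only rule is $\mathbb{E}[(f(x_1,x_2)-x_2)^2]$, which depends only on the joint law of $(x_1,x_2)$.
\item That law is a martingale pair in $[0,1]^2$, since $x_1=\mathbb{E}[x_2\mid s_1]$.
\item Every such law is already realizable with state space $\{0,1\}$. The most direct way is to take the outcome $\omega$ itself as the binary state. The forecasts $\Pr[\omega=1\mid s_i]$ and the benchmark $\Pr[\omega=1\mid s]$ are unchanged. Also, $\omega\perp s_1\mid s_2$ follows from $\theta\perp s_1\mid s_2$, because $\omega$ depends on $(\theta,s)$ only through $\theta$.
\item Hence, for any rule that uses only the reports, the worst case over unknown state spaces equals the known-$\{0,1\}$ worst case.
\end{enumerate}
This is the paper's argument (the regret ``depends solely on the joint distribution of the experts' forecasts''). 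It needs no equivariance and no $\Delta^2\le 1$ slack, and it gives the upper bound for the precision-weighted rule directly.
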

To establish the upper bound, we directly adopt the aggregation scheme proposed by \citet{ABS-18} for the Blackwell-ordered setting.
We observe that, although their algorithm was originally derived for a known state space, the construction of the aggregation function relies \emph{solely} on the experts' reported forecasts.
Consequently, we can explicitly reuse this algorithm in our prior-agnostic robust forecasting problem without any modification.

Specifically, we employ the \emph{precision-weighted average} scheme as analyzed in \citet{ABS-18}.
We define the precision of a forecast $\report \in (0, 1)$ as the inverse of its variance, denoted by $\phi(\report) \triangleq \frac{1}{\report(1-\report)}$.
The aggregation function $\aggregator_{\text{pre}}: [0, 1]^2 \rightarrow [0, 1]$ is defined as follows:
\begin{align*}
    \aggregator_{\text{pre}}(\report_1, \report_2) = 
    \begin{cases} 
    \frac{\sqrt{\phi(\report_1)}\report_1 + \sqrt{\phi(\report_2)}\report_2}{\sqrt{\phi(\report_1)} + \sqrt{\phi(\report_2)}} & \text{if } |\report_1 - \report_2| > 0.4 \\[3.5ex]
    \frac{\phi(\report_1)\report_1 + \phi(\report_2)\report_2}{\phi(\report_1) + \phi(\report_2)} & \text{if } |\report_1 - \report_2| \le 0.4
    \end{cases}
\end{align*}
with appropriate boundary handling for extreme forecasts.
Crucially, as this aggregation rule depends \emph{only} on the observed reports $\report$ and is independent of the unknown state values $\statespace$ or the prior $\prior$, it constitutes a valid and robust strategy in our setting. 

\begin{proof}[Proof of \Cref{pro:blackwell_lowerupperbound_unknown}]
We first prove the lower bound, then prove the upper bound.

\xhdr{Lower bound} 
The lower bound follows from an inclusion argument.
Since the adversary is free to select the standard binary state space $\statespace=\{0, 1\}$, the set of admissible environments in our framework (where $\statespace$ is unknown) is a superset of the environments where the state space is known and fixed to be $\{0, 1\}$.
Consequently, the minimax regret in our setting cannot be lower than the minimax regret in the benchmark setting.
Note that in both settings, the aggregator is agnostic to the prior distribution.
Therefore, the worst-case construction from \citet{ABS-18} remains valid, which established a lower bound of $\frac{5\sqrt{5}-11}{8}$, that applies to our setting also.

\xhdr{Upper bound}
The proof relies on establishing an equivalence between our prior-agnostic robust forecasting problem with unknown state space and the optimization problem solved in \citet{ABS-18}.

First, consider the omniscient benchmark. In the class of Blackwell-ordered information structures $\inforstructureClassBO$, one expert is strictly more informed than the other and the optimal strategy for an omniscient aggregator is simply to follow the prediction of the more-informed expert. Since the aggregator does not know the identity of the more-informed expert, we consider the worst-case scenario where the identity is randomized. Without loss of generality for a specific realization, assume Expert 2 is more informed. The signal $\signal_2$ is sufficient for the state $\state$, implying the Bayesian optimal forecast is simply $\bayesaggregator(\report(\signal)) = \report_2$.

Using Lemma~\ref{lem:regret_equivalence} (Regret Simplification), our objective function simplifies to:
\begin{align}
    \Reg[\inforstructureClassBO]{\aggregator_{\text{pre}}} 
    = \expect{\left(\aggregator_{\text{pre}}(\report(\signal)) - \bayesaggregator(\report(\signal))\right)^2}
    = \expect{\left(\aggregator_{\text{pre}}(\report(\signal)) - \report_2 \right)^2}~. \label{eq:reduced_optimization}
\end{align}

We now observe that the optimization problem in Eq.~\eqref{eq:reduced_optimization} is independent of the specific values of the state space $\statespace = \{\state_1, \state_2\}$.
The expectation depends \emph{solely} on the joint distribution of the experts' forecasts $(\report_1, \report_2)$.
Besides, the set of feasible joint distributions of forecasts corresponds exactly to the set of martingales with a fixed prior mean, regardless of the underlying state values that generated these forecasts.

Consequently, minimizing the worst-case regret in our setting reduces to finding an aggregation function $\aggregator$ that minimizes the expected squared distance to the more-informed forecast under the worst-case.
This is mathematically identical to the optimization problem formulated and solved in Theorem 1 of \citet{ABS-18}.
Therefore, their derived bound applies directly to our setting:
\begin{align*}
    \min_{\aggregator} \Reg[\inforstructureClassBO]{\aggregator_\text{pre}} \le \frac{5\sqrt{5}-11}{8}~.
\end{align*}
This establishes that the precision scheme $\aggregator_{\text{pre}}$ remains robust optimal in our unknown state setting, as the guaranteed upper bound matches the earlier derived lower bound.
\end{proof}

\subsection{General Information Structures}

In this section, we analyze the performance of the aggregator under general information structures, where no structural assumptions (such as Blackwell-order or conditionally independent) are imposed on the experts' signals.
We show that in this unrestricted setting, information aggregation is effectively impossible: the optimal strategy is the trivial forecast of $\sfrac{1}{2}$, and the minimax regret is exactly $\sfrac{1}{4}$.

\begin{restatable}{corollary}{corgenerallbubunknow}
\label{pro:general_lbub_unknow}
For the general information structures, 
we have that $\inf_{\aggregator} \Reg[\inforstructureClass]{\aggregator} = 0.25$. 
There exists a trivial aggregation function $\aggregator\equiv 0.5$ that achieves 
$\Reg[\inforstructureClass]{\aggregator} = 0.25$.
\end{restatable}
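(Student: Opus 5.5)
The proof splits into an upper bound for the constant rule and a matching lower bound.

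\textbf{Upper bound.} Take $\aggregator\equiv \sfrac{1}{2}$ and apply \Cref{lem:regret_equivalence}. For any state space $\statespace\subset[0,1]^2$ and any information structure $\inforstructure$,
\begin{align*}
\avgloss{\aggregator\mid\statespace,\inforstructure}=\expect{\left(\tfrac12-\bayesaggregator(\report(\signal))\right)^2}.
\end{align*}
The Bayesian forecast $\bayesaggregator=\expect{\state\mid\signal}$ always lies in $[0,1]$. Hence $(\sfrac12-\bayesaggregator)^2\le \sfrac14$ pointwise, and $\Reg[\inforstructureClass]{\aggregator}\le \sfrac14$. No structural assumption on $\inforstructure$ is used, so this bound holds uniformly.

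\textbf{Lower bound.} I would use the same inclusion argument as in \Cref{pro:blackwell_lowerupperbound_unknown}. The adversary may choose $\statespace=\{0,1\}$, so the minimax regret in our setting is at least the known-state minimax regret over general structures. That quantity is $\sfrac14$, as cited from \citet{ABS-18} in the introduction.

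To keep the argument self-contained, I would also reproduce the standard XOR construction. Take $\statespace=\{0,1\}$ with uniform prior and let $\signal_1,\signal_2$ be i.i.d.\ uniform bits with $\state=\signal_1\oplus\signal_2$. Each $\signal_i$ alone is independent of $\state$, so both experts always report $\report_1=\report_2=\sfrac12$. The omniscient aggregator, however, knows $\state$ exactly, so $\bayesaggregator\in\{0,1\}$ with equal probability. Any $\aggregator$ outputs the single value $c=\aggregator(\sfrac12,\sfrac12)$ on every realization, so its regret is
\begin{align*}
\tfrac12 c^2+\tfrac12(1-c)^2\ge \tfrac14,
\end{align*}
with equality at $c=\sfrac12$.

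The only point needing care is checking that this correlated structure lies in the general class $\inforstructureClass$. It does, since no restriction is imposed on that class. The construction also needs to be reconciled with the paper's convention that the state is binary with $\state_1\le\state_2$, which is immediate here. Combining the two bounds gives $\inf_\aggregator\Reg[\inforstructureClass]{\aggregator}=\sfrac14$, attained by $\aggregator\equiv\sfrac12$. I expect no real obstacle in either step.
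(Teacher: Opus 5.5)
Your proof is correct and follows the paper's route: the constant rule $\aggregator\equiv\sfrac{1}{2}$ gives the upper bound, and the inclusion argument (the adversary picks $\statespace=\{0,1\}$) gives the lower bound. Your upper bound via \Cref{lem:regret_equivalence} is cleaner than the paper's, which computes the raw loss $\expect{(\sfrac{1}{2}-\outcome)^2}=\sfrac{1}{4}$ and silently drops the nonnegative Bayesian loss; your XOR instance is the same one the paper uses later in \Cref{pro:general_lbub_knowmarg}.
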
 
\begin{proof}
The proof proceeds by establishing matching upper and lower bounds.

\xhdr{Upper bound}
Consider the trivial aggregation strategy $\aggregator_{\text{triv}}(\report) = 1/2$ for all inputs.
The expected squared loss of this strategy against any outcome $\outcome \in \{0, 1\}$ is:
\begin{align*}
  \Reg[\inforstructureClass]{\aggregator_{\text{triv}}} 
    &= \expect{\left(\frac{1}{2} - \outcome\right)^2} \nonumber \\
    &= \prob{\outcome=1}\left(\frac{1}{2} - 1\right)^2 + \prob{\outcome=0}\left(\frac{1}{2} - 0\right)^2 \nonumber \\
    &= \prob{\outcome=1}\cdot \frac{1}{4} + \prob{\outcome=0}\cdot \frac{1}{4}\\ &= \frac{1}{4}~.
\end{align*}
Therefore, the regret upper bound under general information structures is $0.25$.

\xhdr{Lower bound}
The lower bound follows directly from an inclusion argument.
The adversary is free to choose the standard state space $\statespace=\{0, 1\}$.
Therefore, the minimax regret in our framework is at least as high as in \citet{ABS-18} setting, inheriting the lower bound of 0.25.
\end{proof}

\section{Knowing the Marginal Prediction Distributions}
\label{sec:known-marg}
In our previous discussions, we have assumed that the aggregator observes only the experts‘ reports and has no further information about either the underlying information structure or the state space. 
In this section, we instead consider a setting in which the aggregator additionally knows the marginal distributions of the experts' forecasts (i.e., the unconditional distribution of each expert's posterior mean) \citep{DIL-21,LR-22}, however, the underlying joint information structure and the state space still remain unobservable to the aggregator.

In some environments, this additional knowledge can make aggregation essentially trivial.
For instance, in Blackwell-ordered settings, where one expert is uniformly more informative than the other, this additional knowledge can make the problem essentially trivial: the aggregator can identify the better-informed expert by comparing the informativeness implied by their respective marginal forecast distributions (via the induced Blackwell order) and then simply follow that expert and get a regret of $0$.
Similarly, in the setting of conditionally independent information structures with a fixed and known $\{0, 1\}$ state space, knowing the marginal prediction distribution will also make the problem trivial. 
In this case, each expert's marginal forecast distribution must satisfy Bayes plausibility with respect to the prior distribution, so observing the marginal distribution allows the aggregator to infer the prior mean.
Since, under conditional independence, the omniscient Bayesian benchmark depends on the information structure only through this prior mean (see Eqn.~\eqref{eq:bayesbenchmark})\cpcomment{updated}, the aggregation problem again becomes trivial once the marginals are known.

Given these examples, it is natural to expect that knowing the marginal forecast distributions might generally enable the aggregator to design a more sophisticated rule and achieve strictly smaller worst-case regret.
Perhaps surprisingly, this is not the case. We show first that over the class of general information structures, the aggregator does not gain from this additional information: the minimax (worst-case) regret lower bound is unchanged, matching the benchmark setting in which the aggregator observes only the reports (see \Cref{pro:general_lbub_knowmarg}). 
Moreover, even when we restrict attention to conditionally independent information structures, the minimax regret lower bound remains the same as the lower bound in the setting where the state space is fixed to be $\{0, 1\}$  and known to the aggregator (see \Cref{thm:regret lb unknown w marg}).
We then show that a generalized version of our proposed log-odds aggregation scheme can achieve small regret (see \Cref{pro:con_ub_knowmarg}). 


Below, we first present our results for conditionally independent information structures, and then present our results for the general information structures. 

\subsection{Conditionally Independent Information Structures}
We first present a regret lower bound. 
\begin{theorem}
\label{thm:regret lb unknown w marg}
With knowledge of each expert's marginal forecast distribution, it follows that for any aggregation function $\aggregator$, we have
\begin{align*}
    \inf\nolimits_{\aggregator} \; 
    \Reg[\inforstructureClassCI]{\aggregator} \ge 
    \frac{5\sqrt{5}-11}{8}
    \approx 0.022542~.
\end{align*}
That is, no aggregation function can guarantee a regret lower
than $0.022542$.
\end{theorem}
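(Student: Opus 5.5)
The plan is to build a two-point adversarial mixture, as in the proof of \Cref{thm:regret lb unknown}, and invoke Yao's principle. The new requirement is that both environments in the mixture induce \emph{identical} marginal forecast distributions for each expert. Then knowing the marginals conveys nothing about which environment is realized, and the aggregator is effectively back in the report-only game restricted to these two environments. The freedom we exploit is that in the unknown-state model the same report marginal $P^i$ (a mean-$\priormean$ distribution on $[0,1]$) is Bayes-plausible for many different state spaces $\{\state_1,\state_2\}\ni$ with $\state_1\le \min\supp(P^i)$ and $\max \supp(P^i)\le \state_2$. Different state spaces change how informative the report is about the state. In turn they change $\bayesaggregator$ through the generalized odds $\frac{\state_2-\report}{\report-\state_1}$ in Eqn.~\eqref{eq:bayesbenchmark}, and they can move the generalized prior log-odds $\ln\frac{\priormean-\state_1}{\state_2-\priormean}$ even though $\priormean$ itself is pinned down.

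Concretely, I would first fix binary report supports, which is without loss by \Cref{lem:reduction_binary}. Each expert reports $\report_L<\report_H$ with the same prescribed marginal weights, hence the same mean $\priormean$, in both environments. Environment $A$ uses a state space whose upper endpoint coincides with $\report_H$, so a high report is fully revealing and a low one is noisy. Environment $B$ is the mirror image, with its lower endpoint at $\report_L$. By the affine normalization \eqref{eq:xi-affine}--\eqref{eq:bayes-affine}, for given $(\state_1,\state_2)$ the marginal fixes the normalized posteriors $\posteriorprob_i=(\report_i-\state_1)/\statediff$. It also fixes their weights, so each environment is a legitimate CI structure. I would then write the four Bayes forecasts $\bayesaggregator_A(\report)$ and $\bayesaggregator_B(\report)$ and the report-profile probabilities in closed form. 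The Bayes-optimal response to the mixture is the probability-weighted average of the two, exactly as for $\aggregator^*$ in the proof of \Cref{thm:regret lb unknown}. The resulting regret is an explicit rational function of the parameters $(\report_L,\report_H,\priormean)$ and of the mixing weight.

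The remaining step is to optimize that expression and show that its supremum is at least $\frac{5\sqrt{5}-11}{8}$. A natural route is to allow asymmetric constructions in which the two experts play different roles, since $\report_1$ and $\report_2$ may have different marginals. The target is to reproduce the hard instance of \citet{ABS-18} in a limit. There, the aggregator cannot tell whether expert $1$ or expert $2$ carries the decisive information. Here, I would emulate the ``uninformative expert'' by letting one environment's state interval be much wider than that expert's report support, so that its likelihood ratio tends to one, while the other environment's interval nearly hugs that support, making the same reports nearly revealing. Both environments keep the same marginals. I expect the main obstacle to be this limiting embedding. I need to check that the regret of the mixture actually converges to the ABS value $\frac{5\sqrt{5}-11}{8}$, and not to something smaller, because the wide-interval expert is only asymptotically uninformative. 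If the exact limit fails to match, the fallback is to directly maximize the finite-parameter rational function. I would do this symbolically if possible, and otherwise with a certified numerical search mirroring the verification used for \Cref{thm:regret ub unknown}, to exhibit an instance whose value is at least $0.022542$.
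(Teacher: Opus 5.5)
You have the right framework, and your symmetric family is the one the paper uses: binary reports $\report_L<\report_H$ with a common marginal in both environments, one environment whose top state equals $\report_H$ and one whose bottom state equals $\report_L$, i.i.d.\ experts, Yao's principle, and the mixture-Bayes best response. The gap is that you never fix parameters or evaluate the regret, and that is the entire content of the theorem.

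The route you propose for reaching $\frac{5\sqrt5-11}{8}$ cannot work, because under shared marginals there is no limiting embedding of the \citet{ABS-18} ``uninformative expert'' instance:
\begin{itemize}
\item If an expert's likelihood ratios tend to one in the wide environment, its normalized posteriors tend to the normalized prior. Since $\state_2-\state_1\le 1$, every report of that expert then lies within $o(1)$ of $\priormean$.
\item That expert's marginal is the same in the other environment. So an interval that nearly hugs its support has length $o(1)$, which squeezes both experts' reports and that environment's Bayes forecast into an $o(1)$ window around $\priormean$.
\item The shared marginal then places the other expert's reports within $o(1)$ of $\priormean$ in the wide environment as well.
\end{itemize}
The construction therefore degenerates and its regret tends to $0$, not to $\frac{5\sqrt5-11}{8}$; even the constant forecast $\priormean$ has vanishing regret in both environments. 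The original ABS instance fails here for a related reason: a genuinely uninformative expert has a point-mass marginal, which the aggregator sees. Your numerical fallback also falls short as phrased. It would only certify a value of at least $0.022542$, which is slightly below $\frac{5\sqrt5-11}{8}\approx 0.0225425$, so it would not prove the stated bound.

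What is missing is the explicit point in your own symmetric family, with the free endpoints pushed to $0$ and $1$:
\begin{itemize}
\item $\priormean=\sfrac{1}{2}$, $\report_L=\frac{3-\sqrt5}{4}$, $\report_H=\frac{1+\sqrt5}{4}$, each reported with probability $\sfrac{1}{2}$;
\item state spaces $\{0,\report_H\}$ and $\{\report_L,1\}$, with priors forced by $\priormean=\sfrac{1}{2}$;
\item mixing weight $\sfrac{1}{2}$.
\end{itemize}
The two environments then induce the same joint report distribution: $\frac{1+\sqrt5}{8}$ on each agreeing profile and $\frac{3-\sqrt5}{8}$ on each disagreeing one. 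So the best response is the midpoint of the two Bayes forecasts. Every aggregator's mixture regret is at least $\frac14\sum_{\report}\prob{\report}\bigl(\bayesaggregator_{A}(\report)-\bayesaggregator_{B}(\report)\bigr)^2$, with equality at the midpoint. The Bayes forecasts differ by $\frac{7-3\sqrt5}{2}$ on the agreeing profiles (e.g., $\frac{5\sqrt5-11}{4}$ versus $\report_L$ at $(\report_L,\report_L)$). They differ by $\report_H-\report_L=\frac{\sqrt5-1}{2}$ on the disagreeing profiles. The sum is exactly $\frac{5\sqrt5-11}{8}$, so no asymmetric or limiting construction is needed.
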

Recall that in \citet{ABS-18}, they also established a same lower bound under the conditionally independent information structures when the aggregator knows the state space to be $\{0, 1\}$. 
However, their lower bound instance cannot be directly applied to our setting as their constructions are all based on known state space $\{0, 1\}$. Together with the knowledge of the marginal forecast distribution, the aggregator can infer the prior mean and thus achieve $0$ regret. 

To obtain a non-trivial lower bound, one must design information structures with different state spaces. 
In particular, we construct a mixed adversary that randomizes between two single-expert information structures $\inforstructure^A$ and $\inforstructure^B$ in Table~\ref{tab:adversarial_structures_roots}
 (each chosen with probability $1/2$).
Here $\inforstructure^\instance \in \Delta(\statespace^\instance \times \signalspace)$ for $\instance\in \{A, B\}$ specifies one expert's signal distribution over the common signal space $\signalspace = \{\signal_L, \signal_H\}$.
The induced joint information structure is then $\inforstructure^\instance \condiopera\inforstructure^\instance$, 
the two experts' signals are generated (conditionally) independently and identically according to $\inforstructure^\instance$.
The key feature is that, although the two structures have \emph{different state spaces}, they induce the
\emph{same marginal forecast distribution} for each expert. Indeed, in both $\inforstructure^A$ and $\inforstructure^B$, each expert observes a binary signal $\signal\in\{\signal_L,\signal_H\}$ with $\prob{\signal_L}=\prob{\signal_H}=1/2$, and the expert's report (i.e., his
posterior mean) takes the same two values (corresponding to $\signal_L$ and $\signal_H$). 
Consequently, even when the aggregator knows the marginal forecast distributions, this information does not reveal which
information structure has been realized.
That is, from the aggregator's viewpoint, this ``additional information'' is the same under $\inforstructure^A \condiopera\inforstructure^A$ and $\inforstructure^B \condiopera\inforstructure^B$.

The lower bound follows from an indistinguishability argument at the level of joint reports.
Conditional on observing a report pair
$(\report_1,\report_2)$, the aggregator must output a single forecast $\aggregator(\report_1,\report_2)$ that will be used regardless of whether the realized joint structure is $\inforstructure^A \condiopera\inforstructure^A$ or $\inforstructure^B \condiopera\inforstructure^B$.\footnote{Recall that we write $\inforstructure = \inforstructure_1 \circ\inforstructure_2$ to denote a conditionally independent joint information structure $\inforstructure$ obtained by combining the two experts' individual information structures $\inforstructure_1$ and $\inforstructure_2$ (i.e., expert $i$'s marginal information structure is $\inforstructure_i$).} 
However, the omniscient Bayesian benchmark, who knows the realized structure, produces different posterior means under different realization for some report profiles, because the likelihood ratios induced by the two structures differ even though the marginal distributions of reports coincide. 
Therefore, on those profiles the aggregator faces an unavoidable tradeoff: choosing $\aggregator(\report_1,\report_2)$ close to the Bayesian forecast under $\inforstructure^A\condiopera\inforstructure^A$ necessarily incurs a nontrivial squared error under $\inforstructure^B\condiopera\inforstructure^B$, and vice versa, which leads to a non-trivial regret lower bound. 


\begin{table}[H]
    \centering
    \renewcommand{\arraystretch}{1.5} 
    \setlength{\tabcolsep}{10pt}      
    \begin{tabular}{c c c c c}
        \toprule
        \multirow{2}{*}{\textbf{Structure}} & \multirow{2}{*}{\textbf{State} ($\state$)} & \multirow{2}{*}{\textbf{Prior} ($\prior$)} & \multicolumn{2}{c}{\textbf{Signal Probabilities} $\prob{\signal \mid \state}$} \\
        \cmidrule(lr){4-5}
         & & & $\signal =\signal_L$ & $\signal = \signal_H$ \\
        \midrule
        \multirow{2}{*}{$\inforstructure^A$} & $\frac{3-\sqrt{5}}{4}$ & $\frac{\sqrt{5}-1}{2}$ & $\frac{1+\sqrt{5}}{4}$ & $\frac{3-\sqrt{5}}{4}$ \\
                                             & $1$                    & $\frac{3-\sqrt{5}}{2}$ & $0$                    & $1$ \\
        \midrule
        \multirow{2}{*}{$\inforstructure^B$} & $0$                    & $\frac{3-\sqrt{5}}{2}$ & $1$                    & $0$ \\
                                             & $\frac{1+\sqrt{5}}{4}$ & $\frac{\sqrt{5}-1}{2}$ & $\frac{3-\sqrt{5}}{4}$ & $\frac{1+\sqrt{5}}{4}$ \\
        \bottomrule
    \end{tabular}
  \caption{Adversary's mixed strategy of the information structures. In both information structures, each signal is observed with a marginal probability of $\sfrac{1}{2}$.}
    \label{tab:adversarial_structures_roots}
\end{table}

\begin{proof}[Proof of \Cref{thm:regret lb unknown w marg}]
Both the information structures share a common signal space $\signalspace = \{\signal_L, \signal_H\}$.
Information structure $\inforstructure^A \in \Delta(\statespace^A \times \signalspace)$ is defined over the state space $\statespace^A$ with the prior distribution $\prior^A$ defined as follows:
\begin{align*}
    \statespace^A = \left\{\frac{3-\sqrt{5}}{4}, 1\right\}~, \quad \prob[\state\sim \prior^A]{\state = \frac{3-\sqrt{5}}{4}} = \frac{\sqrt{5} - 1}{2}~,
\end{align*}
while information structure $\inforstructure^B \in \Delta(\statespace^B \times \signalspace)$ is defined over the state space $\statespace^B$ with the prior distribution $\prior^B$ defined as follows:
\begin{align*}
    \statespace^A = \left\{0, \frac{1+\sqrt{5}}{4}\right\}~, \quad \prob[\state\sim \prior^B]{\state = \frac{1+\sqrt{5}}{4}} = \frac{\sqrt{5} - 1}{2}~.
\end{align*}
As we can see, both information structure shares the same prior mean $\priormean = 0.5$ although they have different state spaces.

We will then construct each information structure $\inforstructure^\instance$ for $\instance\in\{A, B\}$, such that whenever the expert receives the signal $\signal\in\signalspace$, the expert's posterior mean (i.e., his forecast) over the states satisfies that 
\begin{align*}
    \report_L 
    \triangleq  \sum\nolimits_{\state\in\statespace^\instance} \state\cdot \prob[\inforstructure^\instance]{\state \mid \signal_L} = \frac{3-\sqrt{5}}{4}~;\quad 
    \report_H 
    \triangleq  \sum\nolimits_{\state\in\statespace^\instance} \state\cdot \prob[\inforstructure^\instance]{\state \mid \signal_H} = \frac{1 + \sqrt{5}}{4}~.
\end{align*}
In the meanwhile, the marginal forecast distribution is the same under the information structures $\inforstructure^A$ and $\inforstructure^B$, which is,
\begin{align}
    \label{eq:marg forecast dist}
    \prob[\inforstructure^\instance]{\report_L} = \prob[\inforstructure^\instance]{\report_H} = 0.5, \quad 
    \forall \instance\in\{A, B\}~.
\end{align}
Since each expert receives their signal from the same realized information structure, the expert's marginal forecast distribution is also the same. 


We next analyze the regret of any aggregation rule, even when the aggregator knows the marginal forecast distribution of each expert.
By Yao's principle, we have that
\begin{align*}
    \Reg[\inforstructureClassCI]{\aggregator}
    =\sup\nolimits_{\statespace, \inforstructure \in\inforstructureClassCI} \avgloss{\aggregator\mid \statespace, \inforstructure}
    \ \ge\ 
    \expect[(\statespace^\instance, \inforstructure^\instance)]{\avgloss{\aggregator\mid \statespace^\instance, \inforstructure^\instance\condiopera \inforstructure^\instance}}~.
\end{align*}
Therefore, we have 
\begin{align*}
    \inf\nolimits_{\aggregator}\Reg[\inforstructureClassCI]{\aggregator}\ \ge\ 
    \inf\nolimits_{\aggregator}\ 
    \expect[(\statespace^\instance, \inforstructure^\instance)]{\avgloss{\aggregator\mid \statespace^\instance, \inforstructure^\instance\condiopera \inforstructure^\instance}}~.
\end{align*}
It remains to lower bound the right-hand side.

Given the information structure $\inforstructure^\instance$ for $\instance\in\{A, B\}$, let $\bayesaggregator_{\inforstructure^\instance}(\report)$  denote the omniscient aggregator's forecast given the received report $\report$:
\begin{align*}
    \bayesaggregator_{\inforstructure^A\condiopera \inforstructure^A}(\report) =
    \begin{cases}
    \displaystyle \frac{15-5\sqrt{5}}{4} & \text{if } \report = (\report_H, \report_H) \\
    \displaystyle \frac{3-\sqrt{5}}{4} & \text{otherwise~;}
    \end{cases}
    \quad
    \bayesaggregator_{\inforstructure^B
    \condiopera\inforstructure^B}(\report)  =
    \begin{cases}
    \displaystyle \frac{5\sqrt{5}-11}{4} & \text{if } \report = (\report_L, \report_L) \\
    \displaystyle \frac{1+\sqrt{5}}{4} & \text{otherwise~.}
    \end{cases}
\end{align*}
We compute the marginal probability for the experts reporting $\report$ under each information structure. In this construction, the marginal distributions are symmetric across structures:
\begin{equation}
    \begin{aligned}
    \label{eq:posterior prob marg}
        \prob[\inforstructure^\instance\condiopera \inforstructure^\instance]{\report_L, \report_L} 
        & =
        \prob[\inforstructure^\instance\condiopera\inforstructure^\instance]{\report_H, \report_H} 
        = \frac{1+\sqrt{5}}{8}~; \\
        \prob[\inforstructure^\instance\condiopera\inforstructure^\instance]{\report_L, \report_H} 
        & = \prob[\inforstructure^\instance\condiopera \inforstructure^\instance]{\report_H, \report_L} = \frac{3-\sqrt{5}}{8}~.
    \end{aligned}
\end{equation}
We next analyze the aggregator's optimal forecast given that he knows the mixed strategy of the adversary. 
Let the adversary pick $\instance\in\{A,B\}$ uniformly at random. 
Fix any aggregation function
$\aggregator$ that may depend on the known forecast marginal distribution (i.e., Eqn.~\eqref{eq:marg forecast dist} in our construction). 
Knowing the adversary's mixed strategy, the expected regret of the aggregation function $\aggregator$ is 
\begin{align*}
    & \expect[(\statespace^\instance, \inforstructure^\instance)]{\avgloss{\aggregator\mid \statespace^\instance, \inforstructure^\instance\condiopera \inforstructure^\instance}} \\
    = ~ &  
    \sum\nolimits_{\report\in\{\report_L, \report_H\}^2} 
    \frac{1}{2} \cdot
    \sum\nolimits_{\instance\in\{A, B\}}
    \prob[\inforstructure^\instance\condiopera \inforstructure^\instance]{\report} \left(\aggregator(\report) - \bayesaggregator_{\inforstructure^\instance\condiopera \inforstructure^\instance}(\report) \right)^2 \\
    = ~ &  
    \sum\nolimits_{\report\in\{\report_L, \report_H\}^2}
    \prob[\inforstructure]{\report}\cdot
    \sum\nolimits_{\instance\in\{A, B\}}  
    \frac{1}{2} \cdot
    \left(\aggregator(\report) - \bayesaggregator_{\inforstructure^\instance\condiopera \inforstructure^\instance}(\report) \right)^2
    \tag{By Eqn.~\eqref{eq:posterior prob marg} where
    $\prob[\inforstructure^A\condiopera \inforstructure^A]{\report}
    = 
    \prob[\inforstructure^B\condiopera \inforstructure^B]{\report}$}\\
    = ~ &  
    \sum\nolimits_{\report\in\{\report_L, \report_H\}^2}
    \prob[\inforstructure]{\report}\cdot
    \left(
    \left(\aggregator(\report) - \frac{\bayesaggregator_{\inforstructure^A \condiopera \inforstructure^A}(\report) + \bayesaggregator_{\inforstructure^B \condiopera \inforstructure^B}(\report)}{2}\right)^2
    + 
    \frac{\left(\bayesaggregator_{\inforstructure^A \condiopera \inforstructure^A}(\report) - \bayesaggregator_{\inforstructure^B \condiopera \inforstructure^B}(\report)\right)^2}{4}
    \right) \\
    \ge ~ &   
    \sum\nolimits_{\report\in\{\report_L, \report_H\}^2}
    \prob[\inforstructure]{\report}\cdot
    \frac{\left(\bayesaggregator_{\inforstructure^A \condiopera \inforstructure^A}(\report) - \bayesaggregator_{\inforstructure^B \condiopera \inforstructure^B}(\report)\right)^2}{4} \\
    = ~ &   \frac{1+\sqrt{5}}{8} \cdot \left(
    \left(
    \frac{15 - 5\sqrt{5}}{4}
    - 
    \frac{1 + \sqrt{5}}{4}
    \right)^2 + 
    \left(
    \frac{3 - \sqrt{5}}{4}
    - 
    \frac{5\sqrt{5}-11}{4}
    \right)^2\right) \cdot \frac{1}{4}
    +
    \frac{3 - \sqrt{5}}{8} \cdot 
    2 \cdot  \left(
    \frac{3 - \sqrt{5}}{4}\!\!
    - \!\!
    \frac{1 + \sqrt{5}}{4}
    \right)^2 \cdot \frac{1}{4}\\
    = ~ &   \frac{5\sqrt{5}-11}{8}~,
\end{align*}
where in the inequality we have used the fact that for all $a, b, c\in\reals$, we have 
\begin{align*}
    \frac{1}{2} \left((c-a)^2 + (c-b)^2\right)
    = \left(c - \frac{a+b}{2}\right)^2 + \frac{(a-b)^2}{4}
    \ge \frac{(a-b)^2}{4}~.
\end{align*}
We thus have finished the proof.
\end{proof}

We now show that there exists a generalized log-odds aggregation function that can achieve a regret upper bound of $0.022763$. \wtcomment{update}\cpcomment{updated}
\begin{definition}[Generalized log-odds aggregation function]
\label{defn:generalized prior-agnostic aggregation scheme}
Let $\priormean\in[0, 1]$ be the prior mean.
Fix a parameter $\expertoneweight\in [0, 1]$ and a parameter $\baserate\in[-1, 1]$, 
a {\em generalized log-odds aggregation scheme} is a function $\aggregator_{\expertoneweight, \baserate} (\cdot, \cdot): [0, 1]^2 \rightarrow [0, 1]$ mapping from the two experts' reports $(\report_1, \report_2)\in[0, 1]^2$ to a forecast as follows:
\begin{align*}
    \aggregator_{\expertoneweight, \baserate}(\report_1, \report_2) 
    = \frac{1}{\displaystyle
    1 + \left(\frac{\mu}{1 - \mu}\right)^\baserate \cdot \left(\frac{1 - \report_1}{\report_1}\right)^\expertoneweight
    \cdot \left(\frac{1 - \report_2}{\report_2}\right)^\expertoneweight
    }~.
\end{align*}
Equivalently, aggregator $\aggregator_{\expertoneweight, \baserate}$ satisfies that 
\begin{align*}
    \logit(\aggregator_{\expertoneweight, \baserate}(\report_1, \report_2)) =\cpedit{\expertoneweight \cdot} \logit(\report_1) + \cpedit{\expertoneweight \cdot}\logit(\report_2) - \cpedit{\baserate \cdot}\logit(\mu)~.
\end{align*}
\end{definition}
It is important to note that our proposed generalized log-odds aggregation function does not fully leverage the knowledge of the expert's marginal prediction distributions. Instead, it only uses the inferred prior mean in the aggregation function. Nevertheless, as we show below, even with this minimal piece of information there exists a choice of parameters $(\expertoneweight, \baserate)$ that guarantees a uniformly low regret.
\begin{restatable}{proposition}{proconubknowmarg}
\label{pro:con_ub_knowmarg}
With $\expertoneweight = 0.656089$ and $\baserate = 0.498268$, the aggregation function $\aggregator_{\expertoneweight, \baserate}$ guarantees worst-case regret at most $0.0228$; that is,
\begin{align*}
    \Reg[\inforstructureClassCI]{\aggregator_{0.656089, 0.498268}} \approx 0.022763 < 0.0228~.
\end{align*}
\end{restatable}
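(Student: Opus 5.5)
The plan follows the two-step template used for \Cref{thm:regret ub unknown}: first reduce to a finite-dimensional worst case, then evaluate numerically at the stated parameters. The aggregator $\aggregator_{\expertoneweight,\baserate}$ depends on the marginals only through the prior mean $\priormean$, which Bayes plausibility recovers from either marginal. So the regret of this particular rule is at most its worst case over all CI structures together with the reported $\priormean$. The extra knowledge only restricts the adversary further, so an upper bound computed without that restriction remains valid.

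The first step is to reuse \Cref{lem:reduction_binary}. I fix the state space $\statespace=\{\state_1,\state_2\}$ and the prior; this pins down $\priormean=\prior_1\state_1+\prior_2\state_2$. Once $\priormean$ is fixed, $\aggregator_{\expertoneweight,\baserate}(\cdot,\cdot)$ is a fixed function of the two reports. The bilinearity argument in that lemma therefore applies verbatim. The set of mean-$\prior_2$ posterior distributions for each expert is convex, and its extreme points have at most two support points. Hence the worst case is attained with binary signals for each expert. Two points need checking here. First, the normalization map $\aggregator\mapsto\aggregator\primed$ in the proof of that lemma still works with $\priormean$ held constant: it does, because the prior is fixed throughout the inner supremum. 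Second, the degenerate case $\state_1=\state_2$ needs care, since then $\bayesaggregator\equiv\state_1=\priormean$, but $\aggregator_{\expertoneweight,\baserate}(\priormean,\priormean)$ need not equal $\priormean$. In that case the regret is $(\sigmoid((2\expertoneweight-\baserate)\logit(\priormean))-\priormean)^2$, a one-dimensional function that I bound separately over $\priormean\in[0,1]$. I will include it in the numerical search, where it appears as the limit of the general parametrization.

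The second step writes the regret, exactly as in the proof of \Cref{thm:regret ub unknown}, as an explicit function of seven variables $(\state_1,\state_2,\prior_2,a_1,a_2,b_1,b_2)$:
\begin{align*}
\avgloss{\aggregator_{\expertoneweight,\baserate}\mid\statespace,\inforstructure}=\sum\nolimits_{(\signal_1,\signal_2)\in\{0,1\}^2}\prob[\inforstructure]{\signal_1,\signal_2}\left(\aggregator_{\expertoneweight,\baserate}(\report_1(\signal_1),\report_2(\signal_2))-\bayesaggregator(\report_1(\signal_1),\report_2(\signal_2))\right)^2 .
\end{align*}
Here the reports and $\bayesaggregator$ are given in closed form by Bayes' rule, as in Eqn.~\eqref{eq:bayesbenchmark}, and $\priormean$ is substituted into the aggregator. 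I then fix $(\expertoneweight,\baserate)=(0.656089,0.498268)$ and maximize this function over the compact domain, using the same two-stage procedure as before. The first stage is a global heuristic search to locate candidate maximizers, including boundary configurations such as $\state_2=1$, revealing signals, and $\priormean$ near $0$ or $1$. The second stage refines each candidate on a grid of step $10^{-5}$ to confirm the value $0.022763$. The parameter pair itself would have come from an outer search over $(\expertoneweight,\baserate)$, but the proposition only needs the evaluation at the fixed pair.

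The main obstacle is certifying the numerical maximum. Two issues make this delicate. The aggregator is unbounded in log-odds near reports of $0$ or $1$, and $\priormean$ can approach the boundary, so the objective is not uniformly Lipschitz. To handle this, I would first argue analytically that near these boundaries the regret is controlled by the one- or two-dimensional limiting expressions, such as the degenerate-state case above or fully revealing signals. I would bound those limits directly, and only then apply the grid search on a compact interior region, where a Lipschitz constant makes the grid estimate rigorous up to $O(10^{-5})$ slack. That slack is well below the margin $0.0228-0.022763$.
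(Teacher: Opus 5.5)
Your proposal matches the paper's proof. Both invoke \Cref{lem:reduction_binary}, which applies because $\priormean$, and hence $\aggregator_{\expertoneweight,\baserate}$, is fixed once the state space and prior are fixed. Both then maximize the resulting seven-parameter regret numerically at $(\expertoneweight,\baserate)=(0.656089,0.498268)$, and the paper simply reports the maximizer $\state_1=0$, $\state_2\approx 0.7957$, $\priormean\approx 0.4991$ with value $0.022763$.

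Your extra certification step goes beyond the paper and is over-optimistic as stated. A uniform $10^{-5}$ grid over a seven-dimensional region is computationally out of reach, and an $O(10^{-5})$ Lipschitz slack is not clearly below the $3.7\times 10^{-5}$ margin. A rigorous bound would need adaptive interval or branch-and-bound methods rather than the grid you describe.
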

\cpedit{In the known-marginal distribution setting, the aggregator has access to the marginal distribution of each expert's report. Consequently, the prior mean $\priormean$ is no longer an unknown parameter to be estimated but can be directly identified. To evaluate the performance of our proposed scheme in this benchmark, we consider three baseline algorithms where the prior estimate $\hat{\priormean}$ is set to the true prior mean $\priormean$. First, we consider a trivial baseline that \textit{always outputs the prior mean}. Second, we evaluate the \textit{Standard Bayesian} aggregator from \citet{ABS-18} by substituting the true $\priormean$ into their formula. Finally, we include the parameterized aggregator from \citet{KWW-24}, where the parameter $\lambda$ is optimized to minimize the worst-case regret given the known prior mean. We summarize the regret comparisons in Table~\ref{tab:regret_comparison_knownmarg}.}
\begin{table}[H]
    \centering
    \renewcommand{\arraystretch}{1.5}
    \begin{tabular}{lcc}
        \toprule
        \textbf{Aggregator} & \textbf{Prior Estimate} ($\hat{\priormean}$) & \textbf{Worst-Case Regret} \\
        \midrule
        Always Output Prior Mean & $\priormean$ & $0.2500$ \\
        Standard Bayesian \citep{ABS-18} & $\priormean$ & $0.0403$ \\
        \citet{KWW-24} ($\lambda = 0.8$) & $\priormean$ & $0.0389$ \\
        \midrule
        \textbf{Our Aggregator} ($\aggregator_{\expertoneweight,\baserate}$) & $\mathbf{\expertoneweight \approx 0.656089, \baserate \approx 0.498268}$ & \textbf{0.022763} \\
        \bottomrule
    \end{tabular}
    \caption{The regret comparison in the known-marginal setting.} 
    \label{tab:regret_comparison_knownmarg}
\end{table}

\begin{proof}[Proof of \Cref{pro:con_ub_knowmarg}]
    The proof follows a similar structure to that of \Cref{thm:regret ub unknown}.
    Invoking \Cref{lem:reduction_binary}, it suffices to consider information structures where each expert receives at most two signals, i.e., $\signalspace_1 = \signalspace_2 = \{0, 1\}$.
    
    We parameterize the conditional signal probabilities as $a_1 = \inforstructure_1(\signal_1 \mid \state_1)$, $b_1 = \inforstructure_2(\signal_1 \mid \state_1)$, $a_2 = \inforstructure_1(\signal_1 \mid \state_2)$, and $b_2 = \inforstructure_2(\signal_1\mid \state_2)$.
    Accordingly, the problem of identifying the worst-case regret reduces to a finite-dimensional optimization over the state parameters $\state_1, \state_2$, the prior $\prior$, and the signal probabilities:
    \begin{align*}
        \Reg[\inforstructureClassCI]{\aggregator_{\expertoneweight, \baserate}}
        = 
        \sup\nolimits_{\statespace\in [0, 1]^2: \state_1 \le \state_2}
        \sup\nolimits_{\prior_2\in[0, 1]; a_1, b_1, a_2, b_2\in[0, 1]^4}
        \avgloss{\aggregator_{\expertoneweight, \baserate}\mid \statespace, \inforstructure}~.
    \end{align*}
    By solving the optimization problem for the known-marginal setting, we identify the optimal parameters as $\expertoneweight \approx 0.656089$ and $\baserate \approx 0.498268$. 
    Numerical calculations reveal that the global maximum of the regret function is attained at the configuration where $\state_1 = 0$, $\state_2 = 0.79570$, and the prior mean $\priormean = 0.49908$. 
    Specifically, the experts' signal structures correspond to the following reports:
    expert 1 provides $\report_1^{\primed} = 0.18165$ and $\report_1^{\doubleprimed} = 0.79570$; 
    expert 2 provides $\report_2^{\primed} = 0.18165$ and $\report_2^{\doubleprimed} = 0.79570$. 
    The resulting maximum regret is equal to $0.022763$.
\end{proof}

\subsection{General Information Structures}
We next discuss the regret for the general information structures. 
\begin{restatable}{proposition}{progenerallbubknowmarg}
\label{pro:general_lbub_knowmarg}
For the general information structures, 
we have that $\inf_{\aggregator} \Reg[\inforstructureClass]{\aggregator} = 0.25$. 
There exists a trivial aggregation function $\aggregator\equiv 0.5$ that achieves 
$\Reg[\inforstructureClass]{\aggregator} = 0.25$.
\end{restatable}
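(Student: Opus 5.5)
The plan is to prove matching bounds: an upper bound for the constant rule $\aggregator\equiv \sfrac{1}{2}$, and a lower bound from a single information structure in which the marginal forecast distributions carry no information.

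\textbf{Upper bound.} I would not argue through the raw squared loss. Instead I would invoke \Cref{lem:regret_equivalence}, which gives
\[
\avgloss{\aggregator \mid \statespace, \inforstructure}=\expect{\left(\tfrac{1}{2}-\bayesaggregator(\report(\signal))\right)^2}.
\]
Since $\bayesaggregator$ is a posterior mean of a Bernoulli outcome, it lies in $[0,1]$, so $(\sfrac{1}{2}-\bayesaggregator)^2\le \sfrac{1}{4}$ pointwise. This holds uniformly over all state spaces, priors and (possibly correlated) information structures. The constant rule ignores the marginals entirely, so it is a feasible rule in the known-marginal setting, and $\Reg[\inforstructureClass]{\aggregator}\le\sfrac{1}{4}$.

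\textbf{Lower bound.} Knowing the marginals must not help, so I would use the XOR construction, whose marginals are degenerate. Take $\statespace=\{0,1\}$ with a uniform prior. Let $\signal_1,\signal_2$ be independent uniform bits and set $\state=\signal_1\oplus \signal_2$. This structure is admissible because $\inforstructureClass$ is unrestricted.
\begin{itemize}
\item Each single signal is independent of $\state$, so $\report_1=\report_2=\sfrac{1}{2}$ almost surely.
\item Each expert's marginal forecast distribution is therefore a point mass at $\sfrac{1}{2}$, and the aggregator knows this.
\item The omniscient benchmark sees $(\signal_1,\signal_2)$, hence learns $\state\in\{0,1\}$, so $\bayesaggregator=\state$.
\end{itemize}
Any aggregator, even one allowed to depend on the known marginals, observes only the single report profile $(\sfrac{1}{2},\sfrac{1}{2})$ and must output a single number $c\in[0,1]$. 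By \Cref{lem:regret_equivalence} its regret is
\[
\tfrac{1}{2}c^2+\tfrac{1}{2}(1-c)^2\;\ge\;\tfrac{1}{4},
\]
with equality at $c=\sfrac{1}{2}$. Combining this with the upper bound gives $\inf_{\aggregator}\Reg[\inforstructureClass]{\aggregator}=0.25$.

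\textbf{Main point to check.} The only subtlety is that extra knowledge of the marginals cannot break the lower bound. This is why a degenerate-marginal instance is chosen: the marginals are fully known yet reveal nothing beyond the prior mean $\sfrac{1}{2}$. A randomized adversary (Yao's principle) is therefore unnecessary; one fixed structure suffices.
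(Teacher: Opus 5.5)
Your proposal is correct and follows the paper's proof. You use the constant rule $\aggregator\equiv\sfrac{1}{2}$ for the upper bound and the same XOR instance (uniform prior on $\{0,1\}$, state equal to the parity of the two signals, marginals degenerate at $\sfrac{1}{2}$) for the lower bound; your upper-bound step via \Cref{lem:regret_equivalence}, bounding $(\sfrac{1}{2}-\bayesaggregator)^2\le\sfrac{1}{4}$ pointwise, is a slightly cleaner justification than the paper's, which computes the raw loss $\expect{(\sfrac{1}{2}-\outcome)^2}=\sfrac{1}{4}$ and implicitly uses that regret is at most this loss.
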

\begin{proof}
The upper bound is straightforward as one can use the trivial aggregation function $\aggregator_{\text{triv}}(\report) = 1/2$, which can guarantee a regret of $0.25$.

We next show the lower bound. We use the same hard information structure  proposed in \citet{ABS-18}. 
This instance demonstrates that even if the marginal distribution of each expert's forecast is known to the aggregator, the worst-case regret still remains at least $\sfrac{1}{4}$.

Consider a binary state space $\statespace = \{0, 1\}$ with a uniform prior $\prior(0)=\prior(1)=\sfrac{1}{2}$. 
Experts receive binary signals from $\signalspace = \{\signal_L, \signal_H\}$. 
We construct this information structure as Table~\ref{tab:xor_joint}.

\begin{table}[H]
    \centering
    \renewcommand{\arraystretch}{1.5}
    \setlength{\tabcolsep}{10pt} 
    \begin{tabular}{c c c c c c}
        \toprule
        \multirow{2}{*}{\textbf{State} ($\state$)} & \multirow{2}{*}{\textbf{Prior} ($\prior$)} & \multicolumn{4}{c}{\textbf{Joint Signal Probabilities} $\prob{\signal_1, \signal_2 \mid \state}$} \\
        \cmidrule(lr){3-6}
         & & $(\signal_L, \signal_L)$ & $(\signal_H, \signal_H)$ & $(\signal_L, \signal_H)$ & $(\signal_H, \signal_L)$ \\
        \midrule
        $0$ & $\frac{1}{2}$ & $\frac{1}{2}$ & $\frac{1}{2}$ & $0$ & $0$ \\
        $1$ & $\frac{1}{2}$ & $0$ & $0$ & $\frac{1}{2}$ & $\frac{1}{2}$ \\
        \bottomrule
    \end{tabular}
    \caption{The joint information structure.}
    \label{tab:xor_joint}
\end{table}

Under the constructed information structure, each of the four possible signal profiles $(\signal_1, \signal_2) \in \{\signal_L, \signal_H\}^2$ has an equal unconditional probability of $\sfrac{1}{4}$. 
Moreover, it is easy to see that $\report_i(\signal_L) = \report_i(\signal_H) = \sfrac{1}{2}$ all $i\in[2]$.
In other words, each expert's forecast marginal distribution is a point mass on $\sfrac{1}{2}$.
However, the omniscient forecasts can always infer the true state.
Thus, the aggregator has a relative regret at least $\sfrac{1}{4}$.
\end{proof}

\newpage
\bibliographystyle{ACM-Reference-Format}
\bibliography{mybib}

\newpage
\appendix
\section{Missing Proof in \texorpdfstring{\Cref{sec:prelim}}{prelim}}
\label{apx:proof prelim}
\lemregretequivalence*
\begin{proof}
Recall that the omniscient aggregator reports the true posterior mean given the signal profile $\signal$. By the law of iterated expectations and the fact that $\expect{\outcome \mid \state} = \state$ due to $\outcome\sim \Bernoulli(\state)$, we have:
\begin{align}
    \bayesaggregator(\report(\signal)) 
    = \prob{\outcome = 1 \mid \signal} 
    = \expect{\outcome \mid \signal} = \expect{\expect{\outcome \mid \state} \mid \signal} = \expect{\state \mid \signal}~.
    \label{eq:oracle_equivalence_app} 
\end{align}
Thus, the omniscient forecast $\bayesaggregator(\report(\signal))$ is the conditional expectation of both the outcome $\outcome$ and the state $\state$.

Fix any state space, any information structure, and any aggregation function $\aggregator$, by definition, we have
\begin{align*}
    \avgloss{\aggregator \mid \statespace, \inforstructure} 
    = ~& \expect{\left(\aggregator(\report(\signal)) - \outcome\right)^2} - \expect{\left(\bayesaggregator(\report(\signal))- \outcome\right)^2} \\
    = ~& 
     \expect{\left(\aggregator(\report(\signal)) - \bayesaggregator(\report(\signal))\right)^2} + 2\expect{\left(\aggregator(\report(\signal)) - \bayesaggregator(\report(\signal))\right) 
     \cdot \left(\bayesaggregator - \outcome\right)}~.
\end{align*}
We now show that the cross-term equals to $0$. Conditioning on the signals $\signal$:
\begin{align*}
    & \expect{\left(\aggregator(\report(\signal)) - \bayesaggregator(\report(\signal))\right) 
     \cdot \left(\bayesaggregator(\report(\signal)) - \outcome\right)}  \\
    = ~ & 
    \expect{\expect{\left(\aggregator(\report(\signal)) - \bayesaggregator(\report(\signal))\right) 
     \cdot \left(\bayesaggregator(\report(\signal)) - \outcome\right) \mid \signal}} \\
     = ~ & 
    \expect{\left(\aggregator(\report(\signal)) - \bayesaggregator(\report(\signal))\right) 
    \cdot \expect{\left(\bayesaggregator(\report(\signal)) - \outcome\right) \mid \signal}} \\
     = ~ & 
    \expect{\left(\aggregator(\report(\signal)) - \bayesaggregator(\report(\signal))\right) 
    \cdot \left(\bayesaggregator(\report(\signal)) - \expect{\outcome \mid \signal}\right)} \\
    = ~ & 0~. \tag{Due to Eqn.~\eqref{eq:oracle_equivalence_app}.}
\end{align*}
Consequently, we have $\avgloss{\aggregator \mid \statespace, \inforstructure} = \expect{\left(\aggregator(\report(\signal)) - \bayesaggregator(\report(\signal))\right)^2}$. 
\end{proof}

\end{document}